\documentclass{article}

\usepackage{hyperref} % must be loaded before icml package
\usepackage[accepted]{icml2026}

\usepackage{natbib}
\usepackage{amsmath, amssymb, bm, amsthm} % additional math commands, symbols, boldsymbols
\usepackage{xfrac} % \sfrac function for nicer in-text and math fractions

\newtheorem{definition}{Definition}

\newtheorem{assumption}{Assumption}
\newtheorem{theorem}{Theorem}

\newtheorem{proposition}{Proposition} 
\newcommand{\tp}{\mathsf{T}}  % Transpose

\newcommand{\E}{\mathbb{E}}
\newcommand{\Var}{\mathrm{Var}}

\newcommand{\Cov}{\mathrm{Cov}} % use mathrm instead of \text in equations
\renewcommand{\Pr}{\mathrm{Pr}}

\DeclareMathOperator*{\argmax}{arg\,max}

\def\rvepsilon{{\bm{\varepsilon}}}

\def\rva{{\mathbf{a}}}
\def\rvb{{\mathbf{b}}}
\def\rvc{{\mathbf{c}}}
\def\rvd{{\mathbf{d}}}

\def\rvu{{\mathbf{i}}}

\def\rvm{{\mathbf{m}}}

\def\rvu{{\mathbf{u}}}

\def\rvx{{\mathbf{x}}}

\def\rvz{{\mathbf{z}}}

\def\rmI{{\mathbf{I}}}

\def\vzero{{\bm{0}}}
\def\vone{{\bm{1}}}

\def\vtheta{{\bm{\theta}}}
\def\vgamma{{\bm{\gamma}}}

\def\sR{{\mathbb{R}}}

\newcommand{\bs}[1]{\boldsymbol{#1}}

\newcommand{\ud}{\mathrm{d}}

\usepackage{xspace} 
\def\@onedot{\ifx\@let@token.\else.\null\fi\xspace}
\DeclareRobustCommand\onedot{\futurelet\@let@token\@onedot}

\usepackage[utf8]{inputenc} % allow utf-8 input
\usepackage[T1]{fontenc} % define font encoding for european languages
\usepackage[english]{babel} % adjust language style

\usepackage[babel = true]{microtype} % improves general appearance

\usepackage{booktabs} % better tables; toprule, bottomrule, midrule
\usepackage[flushleft]{threeparttable} % for tablenotes
\usepackage{multirow} % multirow command
\usepackage{tabularx} % for table format e.g. text wrapping
\usepackage{longtable} % for long tables going over pages
\usepackage[tableposition=above]{caption}
\usepackage{siunitx} % S column type to align on decimal
\usepackage{makecell}
\usepackage[table, svgnames, dvipsnames]{xcolor}
\usepackage{nicematrix} % nicetabular environment

\usepackage{float} % use H in figures to determine float pages
\usepackage{subcaption} %  subfigure environment
\usepackage{wrapfig} % wrap text around small floats, wrapfigure and wraptable environments

\usepackage{pdflscape} % landscape environment to turn pages
\usepackage{setspace}
\usepackage{enumitem}
\usepackage{cancel}

\usepackage{algorithm}
\usepackage{algorithmic}

\usepackage{url} % correct url formatting

\usepackage[capitalize,noabbrev]{cleveref}

\icmltitlerunning{Cascaded Flow Matching for Heterogeneous Tabular Data with Mixed-Type Features}

\begin{document}

\twocolumn[
  \icmltitle{Cascaded Flow Matching for Heterogeneous Tabular Data\\ with Mixed-Type Features}

  \icmlsetsymbol{equal}{*}

  \begin{icmlauthorlist}
    \icmlauthor{Markus Mueller}{eur}
    \icmlauthor{Kathrin Gruber}{eur}
    \icmlauthor{Dennis Fok}{eur}
  \end{icmlauthorlist}

  \icmlaffiliation{eur}{Econometric Institute, Erasmus University Rotterdam, Rotterdam, The Netherlands}
  \icmlcorrespondingauthor{Markus Mueller}{mueller@ese.eur.nl}
  \icmlkeywords{generative model, flow matching, tabular data}
  \vskip 0.3in
]

\printAffiliationsAndNotice{}

\begin{abstract}

\looseness=-1 Advances in generative modeling have recently been adapted to tabular data containing discrete and continuous features. 
However, generating mixed-type features that combine discrete states with an otherwise continuous distribution in a single feature remains challenging.
We advance the state-of-the-art in diffusion models for tabular data with a cascaded approach.
We first generate a low-resolution version of a tabular data row, that is, the collection of the purely categorical features and a coarse categorical representation of numerical features. Next, this information is leveraged in the high-resolution flow matching model via a novel guided conditional probability path and data-dependent coupling.
The low-resolution representation of numerical features explicitly accounts for discrete outcomes, such as missing or inflated values, and therewith enables a more faithful generation of mixed-type features.
We formally prove that this cascade tightens the transport cost bound.
The results indicate that our model generates significantly more realistic samples and captures distributional details more accurately, for example, the detection score improves by 51.9\%. Code is available at \url{https://github.com/muellermarkus/tabcascade}.

\end{abstract}

\section{Introduction}

\looseness=-1 Advancements in the field of generative modeling -- rooted in seminal contributions on diffusion models \citep{sohl-dickstein2015, ho2020}, score-based modeling \citep{song2021a} and flow matching \citep{albergo2023, lipman2023, liu2023} -- have yielded state-of-the-art results for high-dimensional modalities that admit a homogeneous underlying representation such as images, audio, and text.
Diffusion-based models for heterogeneous tabular data generation, i.e.,~datasets with categorical, continuous or mixed categorical-continuous features in each sample, \citep{kim2023, kotelnikov2023,zhang2024b,lee2023, mueller2025,shi2025} largely inherit this design choice of a shared generative objective across feature types.
However, categorical and continuous feature types rest on different structural assumptions, such as discrete versus continuous support, probability mass versus density formulations, differing perturbation or noise models, and therefore require distinct representations and generative mechanisms. 
Combining distinct feature types under a unified training objective leads to implicit feature reweighting such that some features dominate learning.
The intermediate case of \emph{mixed-type} features, i.e.,~features whose marginal distributions combine discrete point masses with continuous densities, lacks a dedicated representational and generative treatment, which degrades the realism of the joint distribution.

\begin{figure*}[h]
    \centering
    \includegraphics[width=0.95\textwidth]{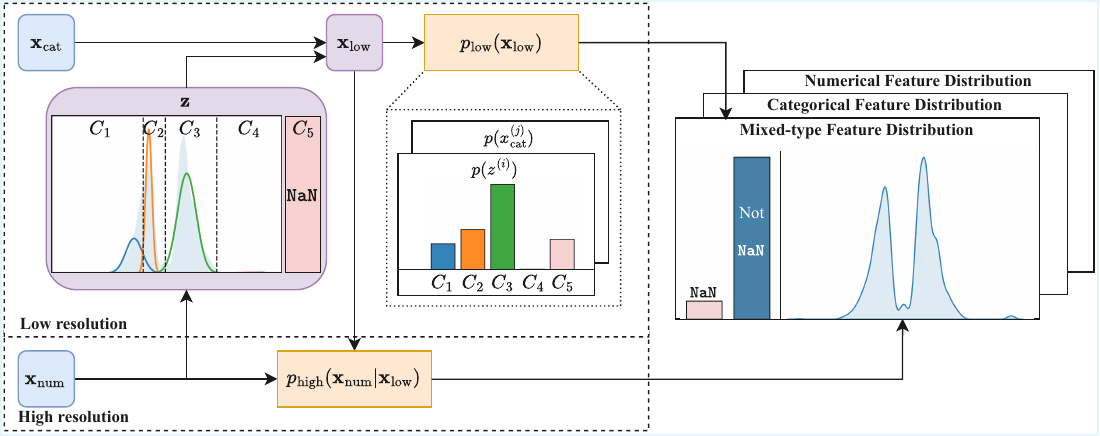}
    \caption{Overview of TabCascade for the missing value generation task. We  derive a categorical, low-resolution representation $\rvz$ from $\rvx_{\text{num}}$, form $\rvx_{\text{low}} = (\rvx_{\text{cat}}, \rvz)$ and then learn $p_{\text{low}}(\rvx_{\text{low}})$. We then learn the high-resolution distribution $p_{\text{high}}(\rvx_{\text{num}} | \rvx_{\text{low}})$ conditional on $\rvx_{\text{low}}$. This reduces the transport cost bound and simplifies the learning task. The discrete state $\rvz$ enables the model to naturally handle mixed-type feature distributions at generation time. This approach generalizes to arbitrary (and multiple) discrete states.
    }
    \label{fig:figure_1}
    \vskip -0.1in
\end{figure*}

\looseness=-1 In this paper, we propose TabCascade, a novel cascaded flow matching framework for heterogeneous tabular data.
Within this cascaded structure, numerical details are generated conditional on a coarse-grained representation of the high-fidelity data. Accordingly, we conceptualize categorical and numerical features as low- and high-resolution representations of a tabular data row. 
We explore discretization methods such as distributional regression trees and Gaussian mixture models to construct a categorical, i.e.,~low-resolution, approximation of the numerical features. TabCascade first learns the low-resolution joint distribution of categorical and discretized numerical data. Then, TabCascade generates numerical data, i.e.,~the high-resolution signal, conditionally on the low-resolution model’s output. In this second step, TabCascade focuses its capacity on where it is most needed: generating details, as opposed to coarse categorical data, which we show is relatively easy to learn. We design the high-resolution model from a conditional probability path guided by low-resolution information, thereby introducing a data-dependent coupling that reduces the transport costs between source and target distributions of high-resolution data. Further, we allow for the paths to be non-linear by utilizing learnable time schedules conditioned on low-resolution information. We choose the categorical part of CDTD \citep{mueller2025} as our low-resolution component.

\looseness=-1 The cascaded formulation also provides a natural mechanism to model \emph{mixed-type} features \citep{li2025}.
Among others, such features arise in censored, zero- or one-inflated, and missing-value–augmented variables, where these different discrete outcomes can carry meaningful information \citep{little1987}. Realistic synthesis therefore requires the model to generate discrete states (including missingness) as part of the data-generating process. By separating coarse discrete structure from continuous refinement, TabCascade directly accommodates these mixed-type features within a unified generative process. Our results show that this substantially benefits the realism of the generated samples and that TabCascade learns the details of the distributions much more accurately than the current state-of-the-art methods.

In sum, our key contributions are:
\begin{itemize}[leftmargin=*, itemsep=1pt]
    \item To the best of our knowledge, we propose the first cascaded diffusion model for tabular data as well as the first diffusion model to address mixed-type feature generation.  
    \item We decompose the generation task into low- and high-resolution parts and propose a novel cascaded flow matching framework. We design a guided conditional probability path to model the high-resolution details.
    \item The use of feature-type–tailored models sidesteps the challenge of balancing type-specific losses, thereby preventing the unintended weighting of features during training that is prevalent in previous work.
    \item We demonstrate state-of-the-art results: the detection score improves by over 50\%, the Wasserstein distance by 50\% and the machine learning efficiency by 30\%.
\end{itemize}

\section{Related Work}

\paragraph{Diffusion models for tabular data.} 
\looseness=-1 The main challenge for tabular data generation is the effective integration of heterogeneous (i.e.,~numerical and categorical) feature sets. TabDDPM \citep{kotelnikov2023} and CoDi \citep{lee2023} combine multinomial diffusion \citep{hoogeboom2021} with DDPM \citep{sohl-dickstein2015, ho2020}; STaSY \citep{kim2023} treats one-hot encoded categorical data as numerical; and TabSyn \citep{zhang2024b} adopts latent diffusion to embed both feature types into a continuous space. Despite its popularity in other domains, latent diffusion has proven less effective for heterogeneous tabular data compared to models defined directly in data space \citep{mueller2025}. 
More recent models, such as TabDiff \citep{shi2025} and CDTD \citep{mueller2025} learn noise schedules alongside the diffusion model to accommodate the feature heterogeneity in tabular data. These models integrate score matching \citep{song2021a,karras2022} with either masked diffusion \citep{sahoo2024} or score interpolation \citep{dieleman2022}, respectively. While most of these models can be easily adapted to be \textit{trainable} on data containing missing values, in their original state none of them can \textit{generate} missing values in numerical features. 

\paragraph{Exploitation of low-resolution information.}
\looseness=-1 Outside the domain of tabular data generation, several approaches exist to leverage low-resolution information. Cascaded diffusion models \citep{ho2022} for super-resolution images define a sequence of diffusion models, where higher resolution models are conditioned on the lower resolution model's outputs. This divide-and-conquer strategy has been successfully used in Google's Imagen model \citep{saharia2022a} for the generation of high-fidelity images, and can be further refined with data-dependent couplings \citep{albergo2024}. Similarly, \citet{tang2024} improve sample quality by encoding images into categorical and continuous tokens, which are modeled separately by an autoregressive model and a diffusion model, respectively. \citet{sahoo2023} introduce auxiliary latent variables to learn a latent lower resolution structure among images to learn pixel-wise conditional noise schedules. This allows the model to adjust the noise in the forward process dependent on low-resolution information of an image. Neural flow diffusion models \citep{bartosh2024} generalize this by learning the entire forward process. More generally, \citet{pandey2022} and \citet{kouzelis2025} show that combining low-level image details with high-level semantic features improves training efficiency and sample quality. However, the concept of resolution in images cannot be transferred to tabular data. This prevents the models above from being applicable to our case. To close this gap, we introduce a novel conceptualization of resolution in the context of tabular data and propose the first cascaded model for high-fidelity tabular data generation.

\section{Problem Statement and Motivation}

\paragraph{Goal.}
\looseness=-1 
Let $\mathcal{D}_{\text{train}} = \{ \rvx_i\}_{i=1}^{N}$ denote a tabular dataset with i.i.d.\ observations $\rvx = (\rvx_{\text{cat}}, \rvx_{\text{num}})$ drawn from an unknown distribution $p_{\text{data}}(\rvx_{\text{cat}}, \rvx_{\text{num}})$. Further, let $\rvx_{\text{cat}} = (x_{\text{cat}}^{(j)})_{j=1}^{K_{\text{cat}}}$ with \mbox{$x_{\text{cat}}^{(j)} \in \{0, \ldots, C_j\}$} represent the $K_{\text{cat}}$ categorical (including binary) features; 
and \mbox{$\rvx_{\text{num}} = (x_{\text{num}}^{(i)})_{i=1}^ {K_{\text{num}}} \in \sR^{K_{\text{num}}}$} the $K_{\text{num}}$ numerical features. The objective is to learn a (parameterized) joint distribution \mbox{$p^\vtheta(\rvx_{\text{cat}}, \rvx_{\text{num}}) \approx p_{\text{data}}(\rvx_{\text{cat}}, \rvx_{\text{num}})$} to generate new samples \mbox{$\rvx^\ast = (\rvx_{\text{cat}}^\ast, \rvx_{\text{num}}^\ast) \sim p^\vtheta(\rvx_{\text{cat}}, \rvx_{\text{num}})$} that match the statistical properties of the training data.
Some elements of $\rvx_{\text{num}}$ may have a continuous marginal density. However, we explicitly allow for features with missing, inflated or censored values, where a given $x_{\text{num}}^{(i)}$ is of \textit{mixed-type}. Its distribution combines a continuous density and discrete point masses, and thus differs considerably from the purely continuous distributions typically considered in diffusion-based generative models.

\paragraph{Inflated values.}
\looseness=-1 Consider a mixed-type feature $x_{\text{mixed}}$ with a single inflated value at $v$ and univariate density \mbox{$p(x_{\text{mixed}}) = \pi_v \cdot \delta_v(x_{\text{mixed}}) + (1-\pi_v) \cdot p_{\text{cont}}(x_{\text{mixed}})$}, where $\pi_v$ is the probability mass at $v$, $p_{\text{cont}}$ is a continuous density, and $\delta_v$ is the Dirac delta function centered at $v$.  
Zero-inflated features ($v = 0$) are common in practice and often carry contextual information: a working time of zero hours in economic survey data may indicate unemployment; in medical data, a drug dosage of zero may indicate the absence of treatment. In both cases, the excess mass at zero represents a distinct participation state.
While existing diffusion models can, in principle, generate such inflated values, they do not explicitly account for this structure. As the distribution becomes more complex, assigning precise probability mass exactly at $v$ becomes increasingly difficult. This setup naturally extends to multiple inflated values, making the discrete part of the distribution categorical instead of binary.

\begin{figure}
    \centering
    \begin{subfigure}[t]{0.47\linewidth}
        \centering
        \includegraphics[width=\textwidth]{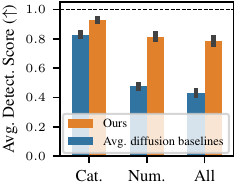}
        \caption{Avg. detection scores over datasets, training and sampling seeds computed on only cat., only num., and all features.}
        \label{fig:detection_motivation}
    \end{subfigure}~
    \hspace{0.1cm}
    \begin{subfigure}[t]{0.47\linewidth}
        \centering
        \includegraphics[width=\textwidth]{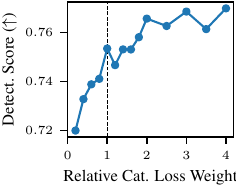}
        \caption{Detection~score as a function of the relative loss weight of cat.\ features (from the \texttt{adult} dataset) for CDTD.}
        \label{fig:balancing_motivation}
    \end{subfigure}~
    \caption{Motivational results}
    \vskip -0.1in
\end{figure}

\paragraph{Missing values.}
\looseness=-1 Likewise, the discrete state in a mixed-type feature can represent missingness. Let $m  = 1$ if feature $x_{\text{mixed}}$ is missing, and $m = 0$ otherwise. Then, the observed data is \mbox{$x_{\text{mixed}} = (1-m ) \cdot x_{\text{num}}^{(\text{latent})} + m  \cdot \texttt{NaN}$} with a latent variable $x_{\text{num}}^{(\text{latent})}$. Generally, the missingness indicator $m$ may depend on both observed and unobserved parts of the data row. The generative model must therefore also be able to infer $p(m  | \rvx_{\text{num}}, \rvx_{\text{num}}^{(\text{latent})})$ for all features \citep{little1987}. 
This formulation is particularly relevant in domains where missing values carry information: missing answers in psychological questionnaires may point towards certain personality traits; missing values in medical datasets might indicate reluctance to disclose information. Thus, the ability to generate realistic missing values is a requirement whenever they represent informative signal required for a downstream task. Examples include the development of imputation techniques and other models that handle the missingness explicitly \citep{daniels2008, molenberghs2014}.
Previous diffusion models for tabular data can be \textit{trained} on numerical features with missing values, but are not designed to \textit{generate} such instances. 

\paragraph{The comparative ease of learning categorical features.}
\looseness=-1 The premise of existing models for tabular data is to generate $\rvx_{\text{cat}}$ and $\rvx_{\text{num}}$ jointly. However, the generation performance is not equal across the two feature types. Empirical evidence in Figure~\ref{fig:detection_motivation} shows that the detection score (averaged over all datasets and diffusion-based models) estimated only on $\rvx_{\text{cat}}$ substantially exceeds the score obtained for only $\rvx_{\text{num}}$. Thus, on average, $\rvx_{\text{num}}$ is more difficult to learn and accurately generate than $\rvx_{\text{cat}}$. Figure~\ref{fig:detection_score_all} in the Appendix shows the detailed results per model. This observation motivates the divide-and-conquer approach of our model: first generating the easier component, $\rvx_{\text{cat}}$, and afterwards the more difficult part $\rvx_{\text{num}}$ conditional on $\rvx_{\text{cat}}$ to improve sample quality, resulting in improved detection scores in Figure~\ref{fig:detection_motivation}.

\paragraph{The pitfall of imbalanced losses.}
\looseness=-1 The heterogeneity of features requires alignment of their respective losses to avoid implicit feature importance weighting \citep{ma2020a}. 
For tabular data, \citet{mueller2025} aim to achieve  a proper balance from first principles as part of their CDTD model. 
Yet, importance parity between $\rvx_{\text{cat}}$ and $\rvx_{\text{num}}$ does not necessarily translate into better overall sample quality. For illustration, we train CDTD on the \texttt{adult} data using a grid of 14 relative loss weights for the average categorical feature loss. Figure~\ref{fig:balancing_motivation} shows improvement of the detection score by increasing the relative weight of the categorical losses. In practice, however, models tend to be too large to effectively tune such hyperparameters. Our novel cascaded flow matching model avoids such balancing issues entirely, without requiring any tuning of relative loss weights.

\section{\mbox{Cascaded Flow Matching for Tabular Data}}

\looseness=-1 Next, we introduce TabCascade, a cascaded flow matching model for heterogeneous tabular data including mixed-type features. An overview of our approach is given in Figure~\ref{fig:figure_1}.
We outline the general framework and motivate the proposed decomposition into low- and high-resolution information in Section~\ref{sec:framework}. 
We next leverage the low-resolution structure to learn feature-specific probability paths to improve the generation of $\rvx_{\text{num}}$ (Section~\ref{sec:high-res}). In addition to a high-resolution flow matching model, we adopt an efficient low-resolution model and demonstrate how a low-resolution representation of $\rvx_{\text{num}}$ can be obtained in practice (Section~\ref{sec:low-res}).

\subsection{Cascaded Framework}
\label{sec:framework}

\paragraph{Tabular data resolution.}
\looseness=-1 In images, resolution refers to the level of visual detail, typically expressed in terms of the total number of pixels. Tabular data lacks a comparable notion of resolution. 
Building on Figure~\ref{fig:detection_motivation} and the idea that coarse information is easier to learn than details, we link data resolution in tabular datasets to feature types, that is, we treat $\rvx_{\text{cat}}$ as low-resolution information and $\rvx_{\text{num}}$ as high-resolution information. We assume that each $x_{\text{num}}^{(i)}$ has a latent low-resolution representation~$z^{(i)}$. For each data row, $\rvx = (\rvx_{\text{cat}}, \rvx_{\text{num}})$, we construct a low-resolution counterpart, $\rvx_{\text{low}} = (\rvx_{\text{cat}}, \rvz)$, where $\rvz = (z^{(i)})_{i=1}^{K_{\text{num}}}$ and each $z^{(i)}$ is a categorical, low-resolution representation of~$x_{\text{num}}^{(i)}$.

\paragraph{Cascaded structure.}
\looseness=-1 Given $\rvz$, we define the cascaded pipeline \citep{ho2022} as a low-resolution model followed by a high-resolution model:
\begin{equation}
    p^\vtheta(\rvx_{\text{cat}}, \rvx_{\text{num}}) = \sum_{\rvz \in \mathcal{Z}}p^\vtheta_{\text{high}}(\rvx_{\text{num}} | \rvz, \rvx_{\text{cat}}) \, p^\vtheta_{\text{low}}(\rvz, \rvx_{\text{cat}}).
\end{equation}
\looseness=-1 Thus, we resemble a latent variable model, with the latent variable~$\rvz$ generated jointly with~$\rvx_{\text{cat}}$. 
This factorization simplifies learning the joint distribution: 
The generation of $\rvx_{\text{cat}}$ is informed by coarse information about $\rvx_{\text{num}}$, and enables the model to capture dependencies across feature types effectively. Additionally, conditioning on the information in $\rvz$ eases learning $p_{\text{high}}$ and generating $\rvx_{\text{num}}$. From the chain rule of entropy, we know that $\mathbb{H}(\rvx_{\text{num}} | \rvz, \rvx_{\text{cat}}) < \mathbb{H}(\rvx_{\text{num}} | \rvx_{\text{cat}})$ if $\rvx_{\text{num}} \not\perp \rvz$. We therefore aim to infer an informative $\rvz$ such that $p(\rvx_{\text{num}} | \rvx_{\text{low}})$ and $p(\rvx_{\text{low}})$ are easier to learn than the joint distribution $p(\rvx_{\text{num}}, \rvx_{\text{cat}})$. 

\paragraph{Mixed-type features.}
\looseness=-1 We sample from $p^\vtheta(\rvx_{\text{cat}}, \rvx_{\text{num}})$ with ancestral sampling: we first sample \mbox{$\rvz, \rvx_{\text{cat}} \sim p^\vtheta_{\text{low}}(\rvz, \rvx_{\text{cat}})$}, and \mbox{$\rvx_{\text{num}} \sim p^\vtheta_{\text{high}}(\rvx_{\text{num}} | \rvz, \rvx_{\text{cat}})$} afterwards. The distribution $p^\vtheta_{\text{high}}(\rvx_{\text{num}} | \rvz, \rvx_{\text{cat}})$ is a deterministic mixture of discrete states of interest and truly continuous values. For instance, let \texttt{NaN} and $v_{\text{infl}}$ be the missing and inflated states of $x_{\text{num}}^{(i)}$, both encoded as separate categories $c_{\text{miss}}$ and $c_{\text{infl}}$ in $z^{(i)}$. Accordingly, if $z^{(i)}=c_\text{miss}$, then $p^\vtheta_{\text{high}}(\rvx_{\text{num}} | \rvz, \rvx_{\text{cat}})$ generates \texttt{NaN} with probability $1$. Similarly if $z^{(i)}=c_\text{infl}$, then $v_\text{infl}$ is generated. Otherwise a continuous value is generated from a learned distribution.

Intuitively, the model first decides on the coarse structure and only fills in the details when necessary. Therefore, $p_{\text{low}}^\vtheta$ entirely determines inflatedness and missingness. We can thus mask the corresponding instances when training $p_{\text{high}}^\vtheta$ to free up model capacity. This setup trivially extends to any arbitrary mixed-type structure, for instance, with multiple inflated values.

\subsection{High-Resolution Model}
\label{sec:high-res}

\looseness=-1 
We first discuss the high-resolution component in the cascade in the absence of discrete states. At the end of this section we explain how these discrete states can easily be incorporated in this part of the model.
To learn $p_{\text{high}}^\vtheta$, we rely on flow matching \citep{lipman2023,albergo2023,liu2023}. For \mbox{$t \in [0,1]$}, we define an ODE $\ud \rvx_t = \rvu_t(\rvx_t | \rvx_1, \rvx_{\text{low}}) \ud t$ with a time-dependent \textit{guided} conditional vector field $\rvu_t(\rvx_t | \rvx_1, \rvx_{\text{low}})$ to transform samples from a source distribution $\rvx_0 \sim p_0$ to the distribution of interest $\rvx_1 \sim p_1 = \sum_{\rvx_{\text{low}} \in \mathcal{X}_{\text{low}}} p_{\text{data}}^\ast(\rvx_1, \rvx_{\text{low}})$ via a probability path $p_t(\rvx_t | \rvx_1, \rvx_{\text{low}})$. Averaged over the data, the vector field generates a flow $\Psi_t(\rvx_0 | \rvx_{\text{low}}) = \rvx_t \sim p_t$ such that $\Psi_0(\rvx_0 | \rvx_{\text{low}}) = \rvx_0 \sim p_0$ and $\Psi_1(\rvx_0 | \rvx_{\text{low}}) = \rvx_1 \sim p_1$.

\paragraph{Guided conditional probability path.}
\looseness=-1 The ODE construction requires the design of an appropriate probability path. Here, the linear path, $\rvx_t = t \rvx_1 + (1-t) \rvx_0$ with {$\rvx_0 \sim \mathcal{N}(\vzero, \rmI)$}, is particularly popular. To account for the high feature heterogeneity and to exploit our knowledge of $\rvx_{\text{low}}$, we introduce a novel conditional probability path, guided by feature-specific time schedules and source distributions.

\looseness=-1 First, we define a time schedule $\vgamma_t(\rvx_{\text{low}}): t \rightarrow [0,1]^{K_{\text{num}}}$ that induces feature-specific nonlinear trajectories conditioned on $\rvx_{\text{low}}$.
We enforce monotonicity of $\vgamma_t(\rvx_{\text{low}})$ with boundary conditions $\vgamma_0 = 0$ and $\vgamma_1 = 1$.
We adopt a neural-network–parameterized fifth-degree polynomial in~$t$, to obtain an efficient parameterization with a closed-form time derivative $\dot{\vgamma}_t$ \citep[][ Appendix~\ref{sec:poly_schedule}]{sahoo2023}. 

\looseness=-1 Second, we utilize $\rvz$ to move $\rvx_0$ closer to the target $\rvx_1$ with \textit{data-dependent couplings} \citep{albergo2024}. We let the coarse information about $\rvx_1$ in $\rvz$ determine the mean $\bs{\mu}(\rvz) := (\mu_{z^{(i)}})_{i=1}^{K_{\text{num}}} \in \sR^{K_{\text{num}}}$ and scale \mbox{$\bm{\sigma}(\rvz) := (\sigma_{z^{(i)}})_{i=1}^{K_{\text{num}}} \in \sR_{+}^{K_{\text{num}}}$} of the source distribution:
\begin{equation}
    \label{eq:x_0_def}
    \rvx_0 = \bs{\mu}(\rvz) + \bm{\sigma}(\rvz) \rvepsilon, \text{ with } \rvepsilon \sim \mathcal{N}(\vzero, \rmI).
\end{equation}
Here, multiplication is understood as element-wise.
As $\rvx_0$ depends on $\rvx_1$ only through $\rvz$, we can derive the induced coupling
\begin{align}
    \label{eq:data_dependent_coupling}
    p(\rvx_0, \rvx_1) &= \sum_{\rvz \in \mathcal{Z}} p(\rvx_0 | \rvz) p(\rvz | \rvx_1) p(\rvx_1) \\ &= \prod_i \sum_{z^{(i)} \in \mathcal{Z}^{(i)}}  p(x^{(i)}_0 | z^{(i)}) p(z^{(i)} | x^{(i)}_1) p(\rvx_1), \nonumber
\end{align}
\looseness=-1 with Gaussian component $p(x^{(i)}_0 | z^{(i)}) = \mathcal{N}(\mu_{z^{(i)}}, \sigma_{z^{(i)}}^2)$ parameterized based on $z^{(i)}$. Hence, we first draw $\rvx_1 \sim p(\rvx_1)$, retrieve $z^{(i)}$ for each $x^{(i)}_1$ feature-wise, and then sample $x^{(i)}_0$ from $p(x^{(i)}_0 | z^{(i)})$. Intuitively, we use $z^{(i)}$ to construct a coupling that locates each $x^{(i)}_0$ in the proximity of its target $x^{(i)}_1$.  

These innovations induce a \textit{guided conditional probability path} $p_t(\rvx_t | \rvx_1, \rvx_{\text{low}})$ such that $\rvx_t \sim p_t(\rvx_t | \rvx_1, \rvx_{\text{low}})$ with 
\begin{equation}
    \label{eq:cond_path}
    \rvx_t = \vgamma_t(\rvx_{\text{low}}) \rvx_1 + (1-\vgamma_t(\rvx_{\text{low}})) [ \bs{\mu}(\rvz) + \bm{\sigma}(\rvz) \rvepsilon].
\end{equation}
\looseness=-1 The probability path is defined in an augmented space such that the samples take group-conditioned paths, with the groups defined by $\rvx_{\text{low}}$.
Since we impose $\vgamma_1 = 1$ and $\vgamma_0 = 0$, we obtain $p_0(\rvx_t | \rvx_1, \rvx_{\text{low}}) = p(\rvx_0 | \rvz)$ and $p_1(\rvx_t | \rvx_1, \rvx_{\text{low}}) = \delta_{\rvx_1}(\rvx_t)$. Thus, $p_t(\rvx_t | \rvx_1, \rvx_{\text{low}})$ defines a valid conditional probability path. 

\paragraph{Guided conditional vector field.}
\looseness=-1 Our knowledge of $p_t(\rvx_t | \rvx_1, \rvx_{\text{low}})$ allows us to apply Theorem~3 from \citet{lipman2023} to derive the guided conditional vector field (see Appendix~\ref{sec:proof_guided_vectorfield}):
\begin{equation}
    \label{eq:vector_field}
    \rvu_t(\rvx_t | \rvx_1, \rvx_{\text{low}}) =   \frac{\dot{\vgamma}_t(\rvx_{\text{low}})(\rvx_1 - \rvx_t)}{1-\vgamma_t(\rvx_{\text{low}})} .
\end{equation}
By substituting \cref{eq:cond_path} in \cref{eq:vector_field} (see Appendix~\ref{sec:derivation_target}), we obtain the target in the conditional flow matching \citep[CFM;][]{lipman2023} loss:
\begin{align}
    \label{eq:cfm_loss}
    \mathcal{L}_{\text{CFM}} &= \E_{t \sim [0,1], (\rvx_1, \rvx_{\text{low}}) \sim p_{\text{data}}^\ast, \rvepsilon \sim \mathcal{N}(\vzero, \rmI) } \lvert \lvert \mathcal{L}_t \rvert \rvert^2_2, \\
    \mathcal{L}_t &= \rvu_t^\vtheta(\rvx_t | \rvx_{\text{low}})- \dot{\vgamma}_t(\rvx_{\text{low}}) ( \rvx_1 - [\bs{\mu}(\rvz) + \bm{\sigma}(\rvz) \rvepsilon]) \nonumber
\end{align}
\looseness=-1 with velocity field $\rvu_t^\vtheta(\rvx_t | \rvx_{\text{low}}) = \dot{\vgamma}_t(\rvx_{\text{low}}) f^\vtheta(\rvx_t, \rvx_{\text{low}}, t)$ parameterized by a neural network $f^\vtheta$. Note, for $\vgamma_t = t \cdot \vone, \bs{\mu}(\rvz)= \vzero$ and $\bm{\sigma}(\rvz) = \vone$, we recover the typical loss from a flow matching model with linear paths. Having trained $\rvu_t^\vtheta$, we simulate $\ud \rvx_t = \rvu_t^\vtheta(\rvx_t | \rvx_{\text{low}}) \ud t$ starting from $\rvx_0 \sim p(\rvx_0 | \rvz)$ to sample from $p_1$. The cascaded pipeline ensures that $\rvx_{\text{low}}$ will be available during generation.

\paragraph{Accommodation of discrete states.}
If $\rvx_1$ contains discrete states, we mask their loss contributions, as they are deterministically inferred from $\rvz \sim p_{\text{low}}^\vtheta$. This allows $p_{\text{high}}^\vtheta$ to focus on feature dependencies and finer details.

\begin{figure*}
    \centering
    \includegraphics[width=1\textwidth]{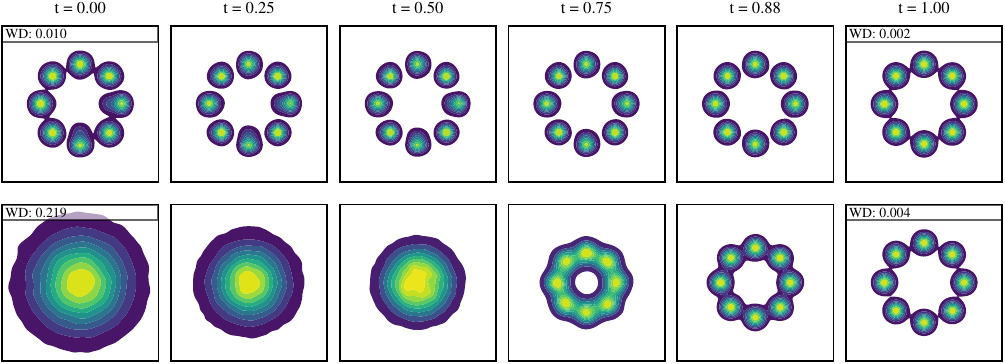}
    \caption{Densities $p_t$ generated from (top) a flow model with data-dependent coupling and non-linear paths, and (bottom) a classic flow model with linear paths and independent coupling. Both models condition on $\rvz$. For the top model, the source distribution is $p_0(\rvx_0) = \int p(\rvx_0, \rvx_1) \ud \rvx_1$ with $p(\rvx_0, \rvx_1)$ as defined in \cref{eq:data_dependent_coupling}. WD represents the Wasserstein distance of $p_t$ to the true data distribution. The data-dependent coupling induces a source distribution that is much closer to the data distribution, and thus effectively reduces transport costs. Savings in model capacity and time are spent on more efficient learning of distributional details.
    }
    \label{fig:compare_prob_paths}
    \vskip -0.1in
\end{figure*}

\subsection{Low-Resolution Representation}
\label{sec:low-res}

\looseness=-1 We have not yet specified how $\rvz$, $\bs{\mu}(\rvz)$, and $\bm{\sigma}(\rvz)$ are obtained.
First, $z^{(i)}$ must be categorical and only summarizes information about $x_1^{(i)}$. Second, to minimize the noise introduced to the training process of the flow model, we learn feature-specific, deterministic encoders $p(z^{(i)} | x_1^{(i)}) = \delta_v(z^{(i)})$ with $v = {\text{Enc}_i(x_1^{(i)})}$ to output $z^{(i)}$ before training the generative model. Finally, $\mu_{z^{(i)}}$ and $\sigma_{z^{(i)}}^2$ of $p(x^{(i)}_0 | z^{(i)})$ need to be dependent on $z^{(i)}$.
Based on these requirements, we propose the distributional regression tree \citep[DT;][]{schlosser2019}. Additionally, we experiment with a Gaussian mixture model \citep[GMM;][]{bishop2006}. For details on the encoders, we refer to Appendix~\ref{sec:encoder_details}.

\looseness=-1 Each model efficiently learns to approximate $p(x_1^{(i)})$ with Gaussian components $p_k(x_1^{(i)}) = \mathcal{N}(\mu_k, \sigma_k^2)$, $k \in \{1, \dots, K_i\}$. 
We set \mbox{$z^{(i)} = \argmax_{k} \log w_k p_k(x_1^{(i)})$} with weights $w_k$ for the GMM; and we let  $z^{(i)} = \text{Tree}(x_1^{(i)})$ be the index of the terminal leaf node $x_1^{(i)}$ is allocated to for the DT. Since each observation of $x_1^{(i)}$ gets matched with a single Gaussian component, we can directly use $\mu_{z^{(i)}} = \mu_{k = z^{(i)}}$ and $\sigma_{z^{(i)}} = \sigma_{k = z^{(i)}}$ to parameterize $p(x^{(i)}_0 | z^{(i)})$ in \cref{eq:data_dependent_coupling}. If $\sigma_{z^{(i)}}^2 \approx 0$, we treat $\mu_{z^{(i)}}$ as an inflated value and account for it explicitly as explained above.
Missing values are removed before fitting the encoder but afterwards added as a separate category $c_{\text{miss}}$ to $z^{(i)}$.  

\looseness=-1 Intuitively, for each data point $x^{(i)}_1$, we select $p(x^{(i)}_0 | z^{(i)})$ to be the Gaussian component that the encoder suggests has most likely generated it. This moves the source distribution $p(\rvx_0 | \rvz)$ closer to the target $p(\rvx_1)$, which benefits both training and sampling by reducing the transport cost (see Figure~\ref{fig:compare_prob_paths}). We provide a proof below. Compared to, e.g.,~minibatch Optimal Transport couplings \citep{tong2024}, our method comes at no additional costs, aside from obtaining $\rvz$.

\begin{theorem}[Data-dependent coupling tightens transport cost bound]
\label{th:data_dep_coupling}
Let $\rvz$ be derived from a DT encoder. Then, our data-dependent coupling (see \cref{eq:data_dependent_coupling}) yields a tighter transport cost bound than an independent coupling.
\end{theorem}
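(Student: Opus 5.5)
The "transport cost bound" I would work with is the standard flow-matching one: for any coupling $\pi$ of $p_0$ and $p_1$, the kinetic energy of the marginal vector field generated by the conditional paths is at most the expected squared displacement, $\E_{(\rvx_0,\rvx_1)\sim\pi}\lVert \rvx_1-\rvx_0\rVert_2^2$, and $W_2^2(p_0,p_1)\le \E_\pi\lVert\rvx_1-\rvx_0\rVert^2$. This follows from Jensen's inequality applied to the marginal field, as in \citet{tong2024} and \citet{albergo2024}. A nonlinear schedule $\vgamma_t$ only reparameterizes time along the straight segment, so the bound is still controlled by the endpoint displacement. The task is then to compare
\[
\E\lVert\rvx_1-\rvx_0\rVert^2
\]
under the data-dependent coupling of \cref{eq:data_dependent_coupling} with the same quantity under independent coupling $\rvx_0\sim\mathcal N(\vzero,\rmI)$, with $\rvx_1$ standardized as usual. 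The squared norm splits over features, and both couplings factorize feature-wise. It therefore suffices to prove the inequality for each feature $i$ and sum.

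For a single feature, conditioning on $z=z^{(i)}$ and using that $\varepsilon$ is independent of $x_1$ given $z$, I will show
\begin{align*}
\E\big[(x_1-x_0)^2\big]
&= \sum_k \Pr(z=k)\Big[\Var(x_1\mid z=k)+\big(\E[x_1\mid z=k]-\mu_k\big)^2+\sigma_k^2\Big].
\end{align*}
For a DT encoder, the leaf parameters are maximum-likelihood Gaussian fits on the observations in each leaf. Hence $\mu_k=\E[x_1\mid z=k]$ and $\sigma_k^2=\Var(x_1\mid z=k)$ (empirically), so the cost equals $2\,\E_z[\Var(x_1\mid z)]$. Under independent coupling with the same source family, the cost is $\Var(x_1)+(\E x_1)^2+1$, which equals $2\Var(x_1)$ after standardization. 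More generally I would compare against the best independent Gaussian $\mathcal N(\E x_1,\Var x_1)$, which gives $2\Var(x_1)$. The law of total variance gives $\Var(x_1)=\E[\Var(x_1\mid z)]+\Var(\E[x_1\mid z])$, so the data-dependent bound is smaller by $2\Var(\E[x_1\mid z])\ge 0$. The inequality is strict whenever the tree makes at least one split with distinct leaf means, which a fitted DT does by construction. Discrete states such as missing or inflated values are masked and have $\sigma_k\approx0$, so they contribute zero cost and only strengthen the inequality.

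The main obstacle is being precise about which "bound" is being tightened and against which baseline. The independent baseline with a unit Gaussian is only directly comparable after standardization. The cleanest choice is to compare against the independent coupling with moment-matched Gaussian source, which dominates the standard-normal case when the data are standardized. I would also need to check that the DT leaf estimates really are the within-leaf conditional mean and variance, as MLE gives, rather than regularized values. If they were shrunk, I would fall back on the inequality $\E[(x_1-\mu_k)^2\mid z=k]+\sigma_k^2\ge 2\Var(x_1\mid z=k)$, which shows the MLE choice is optimal. The comparison would then hold for the fitted tree up to this estimation gap.
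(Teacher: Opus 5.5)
Your proposal is correct and follows the paper's overall structure. Both arguments bound the transport cost by the expected squared displacement (the paper does this via Proposition~3.1 of \citet{albergo2024} with $\dot\Psi_t=\dot{\vgamma}_t(\rvx_1-\rvx_0)$), pull out $\dot{\vgamma}_t^2$, split feature-wise, and drop the cross term using independence of $\rvepsilon$. Both also rely on the DT leaf parameters being within-leaf maximum-likelihood estimates.

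You differ in the leaf-level comparison with the independent cost $\E[(x_1^{(i)})^2]+1=2$. The paper proves two separate inequalities. The first is $\E[(x_1^{(i)}-\mu_{z^{(i)}})^2]\le 1$, because within each leaf the fitted mean beats $\mu_k=0$. The second is $\E[\sigma_{z^{(i)}}^2]\le 1$, which the paper justifies by asserting that \emph{every} leaf variance is at most one because the leaf interval lies inside the support. That pointwise claim does not follow, since restricting to an interval can increase the variance. For example, a feature with mass $0.98$ at $0$ and $0.01$ each at $1$ and $2$ has variance $\approx 0.049$, but its variance on the leaf $\{x>1/2\}$ is $0.25$, about $5$ after standardization.

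Your law-of-total-variance identity gives exactly the averaged statement that is needed, $\E[\sigma_{z}^2]=\E[\Var(x_1\mid z)]\le\Var(x_1)=1$. It also yields the exact gap $2\Var(\E[x_1\mid z])$ and a strictness criterion. So your route is sharper and repairs the weakest step of the paper's argument; the paper's two-inequality split only buys a modular treatment of the mean and scale conditions.

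Two side remarks should be corrected, although your main argument does not use them.

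\emph{The fallback inequality.} The leaf cost $\Var(x_1\mid z=k)+(\E[x_1\mid z=k]-\mu_k)^2+\sigma_k^2$ decreases as $\sigma_k$ shrinks. Your fallback inequality $\E[(x_1-\mu_k)^2\mid z=k]+\sigma_k^2\ge 2\Var(x_1\mid z=k)$ therefore fails whenever $\sigma_k^2+(\E[x_1\mid z=k]-\mu_k)^2<\Var(x_1\mid z=k)$, and the MLE scale is not optimal for this bound. Likewise, $\mathcal{N}(\E x_1,\Var x_1)$ is not the best independent Gaussian in this sense, since the cost is minimized as the source variance goes to zero. Shrinking $\sigma_k$ can only help; it is bias in $\mu_k$ that would need controlling.

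\emph{The schedule factor.} Like the paper, you treat $\dot{\vgamma}_t$ as a common constant factor. Since $\vgamma_t$ depends on $\rvx_{\text{low}}$, the weight $\int_0^1\dot{\vgamma}_t^2\,\ud t$ varies across groups, and your identity then only holds in a weighted form that need not preserve the inequality. So this is an assumption (the paper states it as one), not a consequence of time reparameterization.
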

\begin{proof}\vspace{-.25cm}
    See Appendix~\ref{sec:proof_transport_bound}.
\end{proof}

\subsection{Low-Resolution Model}
\looseness=-1 Finally, the main requirements for the low-resolution model $p_{\text{low}}^\vtheta$ to learn $p_{\text{low}}$ are the efficient and accurate generation of categorical data (and accommodating arbitrary cardinalities). A strength of our framework is that \textit{any generative model for categorical data can be used}. For comparative purposes we choose the CDTD model \citep{mueller2025}, which has been shown to be both efficient and effective at modeling high cardinality features.

\section{Experiments}

\subsection{Experimental Setup}

\paragraph{Baselines.}
\looseness =-1 We benchmark TabCascade against several state-of-the-art generative models for tabular data. These include CTGAN \citep{xu2019a}, TVAE \citep{xu2019a}, the tree-based ARF \citet{watson2023}, the diffusion-based TabDDPM \citep{kotelnikov2023} and TabSyn \citep{zhang2024b} as well as the two very recent TabDiff \citep{shi2025} and CDTD \citep{mueller2025} models.\footnote{Following \citet{mueller2025}, we do not include ForestDiffusion \citep{jolicoeur-martineau2024} or SMOTE \citep{chawla2002}. For medium-sized to large datasets, these methods have been shown to suffer from a severe lack of efficiency. On \texttt{adult}, the default hyperparameters for ForestDiffusion require several hours of training time, which substantially exceeds the training budget of all other diffusion-based models. Also, an early training stop is not an option: ForestDiffusion estimates separate models for each feature and each timestep. Ending training early therefore leaves the generative process incomplete.}
For a fair comparison, we align all models as best as possible.
Since none of the baseline models natively supports missing data generation, we augment each with a simple encoding-based mechanism for missing value simulation. Details on the implementation of the baselines and TabCascade are provided in Appendix~\ref{sec:baseline_implementations} and Appendix~\ref{sec:tabcascade_implementation}, respectively.

\begin{table*}[h]
    \caption{Average results across datasets and seeds. The best, row-wise result is indicated in \textbf{bold}, the second best is \underline{underlined}. We report the standard deviation over the 12 datasets. Metrics with a test label have been computed relative to the test set.}
    \label{tbl:main_results}
    \centering
    \small
    \resizebox{\textwidth}{!}{
\begin{tabular}{lcccccccc}
\toprule
Metric & ARF & TVAE & CTGAN & TabDDPM & TabSyn & TabDiff & CDTD & Ours (DT) \\
\midrule
Detection Score & $0.293_{\pm 0.191}$ & $0.205_{\pm 0.259}$ & $0.078_{\pm 0.075}$ & $0.478_{\pm 0.375}$ & $0.202_{\pm 0.173}$ & $0.430_{\pm 0.294}$ & $\underline{0.518}_{\pm 0.296}$ & $\mathbf{0.787}_{\pm 0.243}$ \\
Shape & $0.958_{\pm 0.019}$ & $0.896_{\pm 0.053}$ & $0.894_{\pm 0.039}$ & $0.938_{\pm 0.070}$ & $0.927_{\pm 0.039}$ & $0.954_{\pm 0.048}$ & $\underline{0.970}_{\pm 0.011}$ & $\mathbf{0.984}_{\pm 0.007}$ \\
Shape (test) & $0.952_{\pm 0.020}$ & $0.892_{\pm 0.052}$ & $0.892_{\pm 0.038}$ & $0.932_{\pm 0.067}$ & $0.924_{\pm 0.039}$ & $0.949_{\pm 0.047}$ & $\underline{0.964}_{\pm 0.012}$ & $\mathbf{0.975}_{\pm 0.012}$ \\
Shape (cat) & $\mathbf{0.993}_{\pm 0.005}$ & $0.903_{\pm 0.081}$ & $0.897_{\pm 0.055}$ & $0.936_{\pm 0.086}$ & $0.950_{\pm 0.031}$ & $0.972_{\pm 0.061}$ & $0.985_{\pm 0.017}$ & $\underline{0.986}_{\pm 0.012}$ \\
Shape (num) & $0.933_{\pm 0.028}$ & $0.899_{\pm 0.032}$ & $0.899_{\pm 0.025}$ & $0.943_{\pm 0.054}$ & $0.918_{\pm 0.045}$ & $0.952_{\pm 0.037}$ & $\underline{0.962}_{\pm 0.019}$ & $\mathbf{0.985}_{\pm 0.006}$ \\
WD (num) & $0.016_{\pm 0.013}$ & $0.023_{\pm 0.011}$ & $0.026_{\pm 0.015}$ & $0.015_{\pm 0.018}$ & $0.031_{\pm 0.031}$ & $0.016_{\pm 0.021}$ & $\underline{0.009}_{\pm 0.006}$ & $\mathbf{0.004}_{\pm 0.003}$ \\
WD (num, test) & $0.023_{\pm 0.022}$ & $0.027_{\pm 0.013}$ & $0.028_{\pm 0.015}$ & $0.069_{\pm 0.189}$ & $0.087_{\pm 0.186}$ & $0.021_{\pm 0.028}$ & $\underline{0.012}_{\pm 0.009}$ & $\mathbf{0.008}_{\pm 0.008}$ \\
JSD (cat) & $\mathbf{0.008}_{\pm 0.006}$ & $0.129_{\pm 0.102}$ & $0.113_{\pm 0.055}$ & $0.083_{\pm 0.103}$ & $0.063_{\pm 0.039}$ & $0.030_{\pm 0.061}$ & $0.020_{\pm 0.019}$ & $\underline{0.018}_{\pm 0.014}$ \\
JSD (cat, test) & $\mathbf{0.016}_{\pm 0.013}$ & $0.131_{\pm 0.100}$ & $0.114_{\pm 0.055}$ & $0.087_{\pm 0.101}$ & $0.065_{\pm 0.040}$ & $0.036_{\pm 0.060}$ & $0.026_{\pm 0.021}$ & $\underline{0.023}_{\pm 0.017}$ \\
Trend & $0.946_{\pm 0.039}$ & $0.852_{\pm 0.117}$ & $0.818_{\pm 0.098}$ & $0.900_{\pm 0.131}$ & $0.893_{\pm 0.071}$ & $0.924_{\pm 0.102}$ & $\underline{0.956}_{\pm 0.032}$ & $\mathbf{0.965}_{\pm 0.026}$ \\
Trend (test) & $0.919_{\pm 0.046}$ & $0.839_{\pm 0.114}$ & $0.818_{\pm 0.098}$ & $0.878_{\pm 0.126}$ & $0.876_{\pm 0.073}$ & $0.899_{\pm 0.097}$ & $\underline{0.932}_{\pm 0.037}$ & $\mathbf{0.940}_{\pm 0.034}$ \\
Trend (mixed) & $\underline{0.936}_{\pm 0.031}$ & $0.787_{\pm 0.113}$ & $0.723_{\pm 0.087}$ & $0.867_{\pm 0.137}$ & $0.867_{\pm 0.059}$ & $0.920_{\pm 0.085}$ & $0.928_{\pm 0.042}$ & $\mathbf{0.946}_{\pm 0.032}$ \\
MLE & $0.065_{\pm 0.049}$ & $0.079_{\pm 0.072}$ & $0.117_{\pm 0.069}$ & $0.312_{\pm 0.942}$ & $0.342_{\pm 0.933}$ & $0.045_{\pm 0.027}$ & $\underline{0.039}_{\pm 0.040}$ & $\mathbf{0.027}_{\pm 0.022}$ \\
$\alpha$-Precision & $0.961_{\pm 0.030}$ & $0.736_{\pm 0.274}$ & $0.858_{\pm 0.045}$ & $0.759_{\pm 0.282}$ & $0.868_{\pm 0.159}$ & $0.919_{\pm 0.100}$ & $\underline{0.971}_{\pm 0.039}$ & $\mathbf{0.975}_{\pm 0.023}$ \\
$\beta$-Recall & $0.348_{\pm 0.094}$ & $0.270_{\pm 0.208}$ & $0.214_{\pm 0.113}$ & $0.463_{\pm 0.262}$ & $0.255_{\pm 0.120}$ & $0.384_{\pm 0.186}$ & $\underline{0.580}_{\pm 0.128}$ & $\mathbf{0.591}_{\pm 0.109}$ \\
DCR Share & $0.807_{\pm 0.014}$ & $0.827_{\pm 0.053}$ & $\mathbf{0.783}_{\pm 0.017}$ & $0.861_{\pm 0.071}$ & $\underline{0.787}_{\pm 0.018}$ & $0.800_{\pm 0.041}$ & $0.883_{\pm 0.070}$ & $0.890_{\pm 0.081}$ \\
MIA Score & $0.974_{\pm 0.020}$ & $0.970_{\pm 0.027}$ & $\mathbf{0.982}_{\pm 0.016}$ & $0.953_{\pm 0.036}$ & $\underline{0.981}_{\pm 0.014}$ & $0.974_{\pm 0.019}$ & $0.958_{\pm 0.036}$ & $0.935_{\pm 0.042}$ \\
\bottomrule
\end{tabular}
}
\vskip -0.1in
\end{table*}

\paragraph{Evaluation metrics.}
\looseness=-1 We evaluate all models on a broad set of standard metrics for synthetic tabular data (for details, see Appendix~\ref{sec:eval_metrics}). We consider Shape, Wasserstein distance (WD), Jensen-Shannon divergence (JSD) and Trend scores to illustrate the quality on uni- and bi-variate characteristics. We also compute the Shape (num) and Shape (cat) variants based on only numerical and only categorical features, respectively. Similarly, our Trend (mixed) metric only considers dependencies across feature types. A primary metric is the detection score, which quantifies the quality of the learned joint distribution. It is estimated based on the AUC of a strong gradient-boosting classifier trained to distinguish real from synthetic samples. Furthermore, we evaluate the performance of the synthetic relative to the real training data on downstream tasks, also known as machine learning efficiency (MLE). Additional results on fidelity, coverage and diversity are provided by the $\alpha$-Precision, $\beta$-Recall and DCR share metrics. Since our goal is to approximate the true distribution and provide a fair comparison to existing baselines, we are, similar to the baselines, not concerned with privacy considerations. However, for completeness, we do provide scores for a membership inference attack (MIA). 

As usual, any privacy guarantees would require the adoption of additional, context-specific techniques in practice. We provide modular code on all evaluation metrics to simplify future research on tabular data generation. 

\paragraph{Datasets.} 
\looseness=-1 We benchmark on a diverse set of 12 tabular datasets: Seven datasets from previous work, including \texttt{adult}, \texttt{beijing}, \texttt{default}, \texttt{diabetes}, \texttt{news}, \texttt{nmes} and \texttt{shoppers} 
\citep{kotelnikov2023, zhang2024b, mueller2025, shi2025}, and five datasets from the  TabZilla tabular data benchmark \citep{mcelfresh2023}, including \texttt{airlines}, \texttt{credit\_g}, \texttt{electricity}, \texttt{kc1} and \texttt{phoneme}. 
The selected datasets include inflated values. The missing values are added (10\%) via a simulated MNAR mechanism \citep{muzellec2020, zhao2023a, zhang2024a}. We provide details on the datasets and the missing value simulation in Appendix~\ref{sec:details_datasets}.

\subsection{Results}

\looseness=-1 Table~\ref{tbl:main_results} summarizes the averaged results over all datasets, three training and ten sampling seeds. The training seeds also affect the missingness simulation and the train test split. TabDDPM produced NaNs for the \texttt{airlines}, \texttt{diabetes} and \texttt{news} datasets, and was assigned the worst score among all competing models.
Detailed results are provided in Appendix~\ref{sec:further_results}. The learned time schedules per dataset are given in Appendix~\ref{sec:learned_time_schedules}, and training and sampling times in Appendix~\ref{sec:train_sample_times}.

\paragraph{State-of-the-art realism of the joint data distribution.}
\looseness=-1 The detection score evaluates the realism of the joint distribution of the synthetic data, and therefore is our main metric of interest. On average, TabCascade with a DT encoder leads to 51.9\% increase in the detection score compared to the best baseline (CDTD). We also provide qualitative comparisons of bivariate densities in Appendix~\ref{sec:qual_compare} which further illustrate that TabCascade fits the subtle details of distributions more accurately. Figure~\ref{fig:detection_motivation} illustrates the benefit of our cascaded pipeline compared the average of the competing diffusion-based models.

\paragraph{Accurate feature-wise distributions.}
\looseness=-1 Metrics reflecting the quality of the univariate densities, i.e.,~Shape, WD and JSD, indicate that TabCascade's ability to explicitly incorporate mixed-type feature distributions greatly improves the sample quality for numerical features over the baselines. The average WD decreases by more than 50\% relative to the best baseline. For categorical features, it performs competitively to CDTD, mainly because of our choice of using CDTD for $p^\vtheta_{\text{low}}$. TabCascade achieves this performance despite $p^\vtheta_{\text{low}}$ being much smaller in parameter count compared to the baselines, as we split parameters between $p^\vtheta_{\text{low}}$ and $p^\vtheta_{\text{high}}$. This supports our initial motivation that categorical data distributions are easier to learn. In principle, further performance gains could be realized by choosing a different model as $p^\vtheta_{\text{low}}$. 

\paragraph{Effective learning of inter-feature dependencies in a cascaded framework.}
\looseness=-1 In principle, a cascaded pipeline could make it more challenging to capture dependencies across feature types compared to a joint model. However, our introduction of $\rvz$ completely alleviates this concern: On average, TabCascade performs \textit{better} than the best baseline in terms of Trend and Trend (mixed), which evaluates the bivariate dependencies across feature types only.

 \begin{table*}[h]
        \caption{Ablation results averaged over all datasets and seeds. We report the standard deviation over the 12 datasets. The best, column-wise result is indicated in \textbf{bold}, the second best is \underline{underlined}. Changing from CDTD to a flow matching (FM) high-resolution model implies \textit{independent} coupling and \textit{linear} paths. Grey represents the full TabCascade (DT).}
        \label{tbl:main_ablation}
        \centering
        \resizebox{\textwidth}{!}{
        \begin{NiceTabular}{lcccccccccc}
        \CodeBefore
    \rowcolor{Gainsboro!60}{7}
    \Body
\toprule
 Metric & \thead{Detection\\Score} & Shape & \thead{Shape\\(num)} & \thead{WD\\(num)} & Trend & \thead{Trend\\(mixed)} & MLE & $\alpha$-Precision & $\beta$-Recall & \thead{MIA\\Score} \\
\midrule
CDTD & $0.518_{\pm 0.296}$ & $0.970_{\pm 0.011}$ & $0.962_{\pm 0.019}$ & $0.009_{\pm 0.006}$ & $0.956_{\pm 0.032}$ & $0.928_{\pm 0.042}$ & $0.039_{\pm 0.040}$ & $0.971_{\pm 0.039}$ & $0.580_{\pm 0.128}$ & $\underline{0.958}_{\pm 0.036}$ \\
+ cascade & $0.694_{\pm 0.297}$ & $0.974_{\pm 0.024}$ & $0.965_{\pm 0.033}$ & $0.011_{\pm 0.015}$ & $0.962_{\pm 0.025}$ & $0.936_{\pm 0.037}$ & $0.039_{\pm 0.042}$ & $0.959_{\pm 0.082}$ & $0.547_{\pm 0.169}$ & $0.952_{\pm 0.038}$ \\
+ latents $\rvz$ (DT) & $0.175_{\pm 0.200}$ & $0.926_{\pm 0.046}$ & $0.887_{\pm 0.081}$ & $0.020_{\pm 0.015}$ & $0.870_{\pm 0.069}$ & $0.764_{\pm 0.096}$ & $0.095_{\pm 0.086}$ & $0.962_{\pm 0.034}$ & $0.443_{\pm 0.199}$ & $\mathbf{0.977}_{\pm 0.020}$ \\
change to FM & $0.719_{\pm 0.252}$ & $0.982_{\pm 0.008}$ & $0.982_{\pm 0.007}$ & $0.004_{\pm 0.002}$ & $0.961_{\pm 0.030}$ & $0.938_{\pm 0.040}$ & $\underline{0.028}_{\pm 0.021}$ & $0.974_{\pm 0.027}$ & $0.587_{\pm 0.108}$ & $0.937_{\pm 0.044}$ \\
+ data dep.\ coupling & $\underline{0.786}_{\pm 0.249}$ & $\underline{0.983}_{\pm 0.007}$ & $\underline{0.985}_{\pm 0.006}$ & $\mathbf{0.004}_{\pm 0.002}$ & $\underline{0.965}_{\pm 0.025}$ & $\mathbf{0.947}_{\pm 0.027}$ & $0.029_{\pm 0.022}$ & $0.974_{\pm 0.025}$ & $\underline{0.591}_{\pm 0.110}$ & $0.934_{\pm 0.041}$ \\
+ non-linear paths & $\mathbf{0.787}_{\pm 0.243}$ & $\mathbf{0.984}_{\pm 0.007}$ & $\mathbf{0.985}_{\pm 0.006}$ & $\underline{0.004}_{\pm 0.003}$ & $\mathbf{0.965}_{\pm 0.026}$ & $\underline{0.946}_{\pm 0.032}$ & $\mathbf{0.027}_{\pm 0.022}$ & $\underline{0.975}_{\pm 0.023}$ & $\mathbf{0.591}_{\pm 0.109}$ & $0.935_{\pm 0.042}$ \\
switch DT to GMM & $0.479_{\pm 0.251}$ & $0.967_{\pm 0.012}$ & $0.958_{\pm 0.017}$ & $0.009_{\pm 0.005}$ & $0.952_{\pm 0.030}$ & $0.921_{\pm 0.031}$ & $0.037_{\pm 0.026}$ & $\mathbf{0.976}_{\pm 0.013}$ & $0.548_{\pm 0.101}$ & $0.958_{\pm 0.038}$ \\
\bottomrule
\end{NiceTabular}
        }
\vskip -0.15in
\end{table*}

\paragraph{Enhanced predictive utility in downstream tasks.}
\looseness=-1 The architecture of TabCascade  enables a strong emphasis on distributional details, which can enhance data utility.  Accordingly, we observe that, on average, TabCascade achieves a 30\% lower MLE score relative to the best baseline, i.e.,~when the synthetic data is used as a plug-in replacement for the true data in a downstream task. 

\paragraph{Improved fidelity and coverage with moderate diversity trade-offs.}
\looseness=-1 The greater focus on details naturally translates into greater sample fidelity, as highlighted by the $\alpha$-Precision and coverage, evaluated by the $\beta$-Recall score. However, moving samples to more precise areas in the data space comes with the downside of reduced diversity compared to a test set in terms of a greater DCR share.

\paragraph{Privacy implications of high-fidelity synthesis.}
\looseness=-1 High fidelity samples are naturally closer to the data distribution. This can have a negative impact on privacy. We emphasize that in practice samples should be carefully examined with respect to the relevant problem-specific privacy metric. Formal privacy guarantees require additional context-dependent mechanisms, like differential privacy. In Table~\ref{tbl:main_results}, we show that privacy, when measured by MIA, remains high. The DCR share should not be interpreted as a privacy metric \citep{yao2025}.

\subsection{Ablation Studies}

Below, we summarize the insights from multiple ablation studies. For many results we will refer to Appendix~\ref{sec:additional_ablations}.

\paragraph{Impact of cascaded factorization and latent augmentation.}
\looseness=-1 Table~\ref{tbl:main_ablation} compares the average performance of the vanilla CDTD \citep{mueller2025}, to a model that adds the cascaded pipeline, i.e.,~specifies $p(\rvx_{\text{cat}}) \, p(\rvx_{\text{num}} | \rvx_{\text{cat}})$, and a model that adds $\rvz$ to define $p(\rvx_{\text{cat}}, \rvz) \, p(\rvx_{\text{num}} | \rvx_{\text{cat}}, \rvz)$, including the relevant loss masking. Other hyperparameters were held constant. Results show that the CDTD model itself already benefits from the cascaded structure. However, adding the latents without the further improvements of TabCascade, leads to a substantial drop in sample quality. This may be caused by CDTD relying on learnable noise schedules that aim for the diffusion losses to develop linearly in time. Adding highly informative signal, like $\rvz$ makes this goal more difficult for the model, such that the learnable noise schedules actually become a hindrance. 

\paragraph{Benefits of data-dependent coupling and nonlinear probability paths.}
\looseness=-1 To reap the benefits of introducing $\rvz$, TabCascade adds data-dependent coupling and learnable, non-linear paths. As shown in Table~\ref{tbl:main_ablation}, both improve the realism of the univariate and joint densities as well as the statistical dependencies among features over a vanilla flow matching (FM) model with linear paths and independent coupling. These changes greatly benefit the detection score in particular. The effect of adding non-linear paths is subtle. However, we emphasize that our specification is strictly more flexibly than fixed, linear paths. If it benefits $\mathcal{L}_{\text{CFM}}$, the learnable time schedule can become linear, see Appendix~\ref{sec:learned_time_schedules} for illustrations.

\paragraph{Impact of discretization strategy on high-resolution modeling.}
\looseness=-1 In Table~\ref{tbl:main_ablation}, the DT encoder consistently outperforms the GMM encoder. This is because the DT encoder induces a finer granularity into $\rvz$, i.e., it estimates more Gaussian components. For instance, for the $\texttt{adult}$ data, DT on average encodes 65.5 groups, whereas GMM only finds 12.5 on average. In addition, the reduced overlap in the Gaussian components estimated by the DT encoder (see Appendix~\ref{sec:encoder_details}) may benefit the generative model by providing a more effective clustering of samples.

\paragraph{Investigating potential overfit.}
\looseness=-1 In Table~\ref{tbl:main_results}, we also report the WD, JSD, Shape and Trend metrics computed using the test set. We can compare them to their estimated from the training sets to investigate potential overfitting. There are only mild differences between train and test set metrics. The differences we see for TabCascade are comparable with the results for the other models, indicating that TabCascade does not overfit to the data. If TabCascade would overfit, we would also expect a very low MIA score instead of the lower but still very high MIA scores we see in our results.

\paragraph{Additional ablation I: Training data without missings.}
In Table~\ref{tbl:clean_results} in Appendix~\ref{sec:additional_ablations}, we confirm that TabCascade also outperforms the baselines on complete data, i.e., without any simulated missings. This highlights the value of our proposed cascaded pipeline for general tabular data generation tasks. We emphasize that generating missing values in $\rvx_{\text{num}}$ is an \textit{option} and not a \textit{necessity} of TabCascade. It is evident that our cascaded approach has value beyond the generation of missing values.

\paragraph{Additional ablation II: Effect of missingness rate.} 
We also investigate the effect of increasing the rate of simulated missings from $p=0.10$ to $p=0.25$ and $p=0.50$. Table~\ref{tbl:ablation_missings} in Appendix~\ref{sec:additional_ablations} confirms the general pattern discussed above. The relative performance gain of using TabCascade over CDTD stays consistent as we increase $p$. Many metrics barely worsen, despite the significant increase in missings.

\paragraph{Additional ablation III: Encoder complexity.}
We provide a discussion on the effect of the varying encoder complexity in Appendix~\ref{sec:ablation_dt_complexity}. This includes an analysis of the proportion of masked inputs to $p^\vtheta_{\text{high}}$. Changing the complexity allows us to navigate the fidelity-privacy trade-off, conditional on the low-resolution model choice.

\paragraph{Additional ablation IV: ARF as low-resolution model.}
We provide results for using ARF instead of CDTD as the low-resolution model $p_{\text{low}}^\vtheta$ in Table~\ref{tbl:ablation_arf} in Appendix~\ref{sec:additional_ablations}. We do not retrain $p_{\text{high}}^\vtheta$. The results show that using ARF trades off a much lower detection score and $\beta$-Recall for improved univariate density metrics. Thus, depending on which metric is more important to the practitioner, a different choice of $p_{\text{low}}^\vtheta$ can further improve the performance of TabCascade.

\section{Conclusion}

\looseness=-1 We introduced TabCascade, a cascaded flow matching model that generates high-resolution, numerical features based on their low-resolution latents and categorical features. The model builds on a novel conditional probability path guided by low-resolution information and combines it with feature-specific, learnable time schedules that enable non-linear paths. This framework allows the direct accommodation of mixed-type features and provably lowers the transport cost bound. Our extensive experiments demonstrate TabCascade's enhanced ability to generate realistic samples and learn the details of the data distribution. A multitude of ablation studies confirms the robustness of our findings and illustrates the value of the introduced model components.

\looseness=-1 Generalizing the cascaded framework to other data modalities, adopting it for data imputation, and integrating privacy guarantees are left for future work. To further improve sample quality, TabCascade could be combined with an autoregressive low-resolution model. Lastly, the number of parameters in the high-resolution model could be optimized depending on the number of numerical features and the proportion of masked entries.
\vspace{3em}

\section*{Impact Statement}

This paper presents work whose goal is to advance the field of generative modeling of tabular data. Being able to generate faithful copies of true datasets comes with obvious risks, one of them being the manipulation of datasets to support otherwise untenable claims or steer the public opinion in certain directions. We advice to never blindly trust any tabular dataset but to confirm their origin, trustworthiness and integrity. This is in particular important when data is used for statistical inferences that inform decision-making processes. Any synthetic dataset should be labeled as such when it is made available to others to prevent unintended ill-usage.

\section*{Acknowledgements}

This work used the Dutch national e-infrastructure with the support of the SURF Cooperative using grant no.\ EINF-13805. We thank the reviewers for their constructive comments, and one anonymous reviewer in particular for helping us to fix a bug in our original post-processing routine that led to the detection model being mostly influenced by floating-point precision differences.

\bibliography{references} 
\bibliographystyle{icml2026}

\newpage
\appendix 
\onecolumn

\section{Appendix}

\begin{enumerate}
  \item \hyperref[sec:proofs_derivations]{Proofs and Derivations} \dotfill \pageref{sec:proofs_derivations}
  \item \hyperref[sec:details_datasets]{Benchmark Datasets} \dotfill \pageref{sec:details_datasets}
  \item \hyperref[sec:implementations]{Implementation Details} \dotfill \pageref{sec:implementations}
  \item \hyperref[sec:eval_metrics]{Evaluation Metrics} \dotfill \pageref{sec:eval_metrics}
  \item \hyperref[sec:encoder_details]{Encoder Details} \dotfill \pageref{sec:encoder_details}
  \item \hyperref[sec:schedule_details]{Time Schedule Details} \dotfill \pageref{sec:schedule_details}
  \item \hyperref[sec:qual_compare]{Qualitative Comparisons} \dotfill \pageref{sec:qual_compare}
  \item \hyperref[sec:further_results]{Detailed Main Results} \dotfill \pageref{sec:further_results}
  \item \hyperref[sec:additional_ablations]{Additional Ablation Experiments} \dotfill \pageref{sec:additional_ablations}
  \item \hyperref[sec:train_sample_times]{Training and Sampling Times} \dotfill \pageref{sec:train_sample_times}
\end{enumerate}

\subsection{Proofs and Derivations}
\label{sec:proofs_derivations}

\subsubsection{Derivation of the guided conditional vector field for the high-resolution model}
\label{sec:proof_guided_vectorfield}

Theorem 3 in \citet{lipman2023} proves that if the Gaussian conditional probability path is of the form \mbox{$p_t(\rvx_t | \rvx_1) = \mathcal{N}(\bs{\mu}_t(\rvx_1), \sigma_t^2(\rvx_1) \rmI)$} then the unique vector field that generates the flow $\Psi_t$ has the form:
\begin{equation}
    \label{eq:theorem_3_lipman}
    \rvu_t(\rvx_t | \rvx_1) = \frac{\dot{\sigma}_t(\rvx_1)}{\sigma_t(\rvx_1)} (\rvx_t - \bs{\mu}_t(\rvx_1)) + \dot{\bs{\mu}}_t(\rvx_1).
\end{equation}

In \cref{eq:cond_path}, we implicitly define the guided conditional probability path as
\begin{equation*}
    \rvx_t = \vgamma_t(\rvx_{\text{low}}) \rvx_1 + (1-\vgamma_t(\rvx_{\text{low}})) [ \bm{\mu}(\rvz) + \bm{\sigma}(\rvz) \rvepsilon] ,
\end{equation*}
where multiplication is understood as element-wise. This induces the probability path
\begin{equation}
    p_t(\rvx_t | \rvx_1, \rvx_{\text{low}}) = \mathcal{N}(\bm{\mu}_t(\rvx_1, \rvx_{\text{low}}), \mathrm{diag}(\bm{\sigma}^2_t(\rvx_1, \rvx_{\text{low}}))),
\end{equation}
with
\begin{equation}
    \bm{\mu}_t(\rvx_1, \rvx_{\text{low}}) = \vgamma_t(\rvx_{\text{low}}) \rvx_1 + (1- \vgamma_t(\rvx_{\text{low}})) \bm{\mu}(\rvz),
\end{equation}
and
\begin{equation}
    \bm{\sigma}_t(\rvx_1, \rvx_{\text{low}}) = (1- \vgamma_t(\rvx_{\text{low}})) \bm{\sigma}(\rvz),
\end{equation}
since $\rvx_1$ and $\rvx_{\text{low}}$ are fixed. To proceed, note that we specified $K_{\text{num}}$ distinct Gaussian distributions. Therefore, we can simply apply \cref{eq:theorem_3_lipman} to each element of $\rvx_t$ separately. 

The time-derivatives are given by
\begin{equation}
    \dot{\bm{\mu}}_t(\rvx_1, \rvx_{\text{low}}) = \dot{\vgamma}_t(\rvx_{\text{low}}) ( \rvx_1 - \bm{\mu}(\rvz)) \text{ and }  \dot{\bm{\sigma}}_t(\rvx_1, \rvx_{\text{low}}) = -\dot{\vgamma}_t(\rvx_{\text{low}}) \bm{\sigma}(\rvz).
\end{equation}
Plugging into \cref{eq:theorem_3_lipman} and (for brevity) omitting the dependence of $\vgamma_t, \bm{\mu}$ and $\bm{\sigma}$ on $\rvx_{\text{low}}$ and $\rvz$, we derive the conditional vector field as
\begin{align*}
    \rvu_t(\rvx_t | \rvx_1, \rvx_{\text{low}}) &= \frac{-\dot{\vgamma}_t \cancel{\bm{\sigma}}}{(1 - \vgamma_t) \cancel{\bm{\sigma}}} (\rvx_t - [\vgamma_t \rvx_1 + (1- \vgamma_t) \bm{\mu}]) + \dot{\vgamma}_t (\rvx_1 - \bm{\mu}) \\
    &=\frac{-\dot{\vgamma}_t}{1 - \vgamma_t} (\rvx_t -  \vgamma_t \rvx_1 - (1- \vgamma_t) \bm{\mu} - (1 - \vgamma_t) \rvx_1 + (1 - \vgamma_t) \bm{\mu}) \\
    &= \frac{\dot{\vgamma}_t (\rvx_1 - \rvx_t)}{1 - \vgamma_t}.
\end{align*}

\subsubsection{Derivation of the training target for the high-resolution model}
\label{sec:derivation_target}

To derive the training target, We plug \cref{eq:cond_path} into \cref{eq:vector_field} to get
\begin{align*}
    \rvu_t(\rvx_t | \rvx_1, \rvx_{\text{low}}) &= \frac{\dot{\vgamma}_t(\rvx_{\text{low}})(\rvx_1 - \rvx_t)}{1-\vgamma_t(\rvx_{\text{low}})} \\
    &= \frac{\dot{\vgamma}_t(\rvx_{\text{low}})}{1-\vgamma_t(\rvx_{\text{low}})} \bigl((1 - \vgamma_t(\rvx_{\text{low}})) \rvx_1 - (1-\vgamma_t(\rvx_{\text{low}})) [ \bs{\mu}(\rvz) + \bm{\sigma}(\rvz)  \rvepsilon] \bigr) \\
    &= \dot{\vgamma}_t(\rvx_{\text{low}}) \bigl( \rvx_1 - [ \bs{\mu}(\rvz) + \bm{\sigma}(\rvz) \rvepsilon] \bigr),
\end{align*}
which is the scaled difference between ground-truth sample $\rvx_1$ and source sample $\rvx_0$ from our data-dependent source distribution.

\subsubsection{Proof: Data-dependent coupling tightens transport cost bound}
\label{sec:proof_transport_bound}

Proposition 3.1 by \citet{albergo2024} shows that for a probability flow defined as
\begin{equation*}
    \Psi_t(\rvx_0) = \alpha_t \rvx_1 + \beta_t \rvx_0 \in \sR^{K_{\text{num}}},
\end{equation*}
such that $\Psi_0(\rvx_0) = \rvx_0 \sim p_0$ and $\Psi_1(\rvx_0) = \rvx_1 \sim p_1$, the transport costs are upper-bounded by
\begin{equation}
    \E_{\rvx_0 \sim p_0} \bigl[ \lvert\lvert \Psi_1(\rvx_0) - \rvx_0 \rvert\rvert^2 \bigr] \leq \int_0^1 \E[ \lvert \lvert \dot{\Psi}_t \rvert\rvert^2] \ud t < \infty.
\end{equation}
Minimizing the left-hand side implies finding the optimal transport plan as defined by \citet{benamou2000}, corresponding to the minimum Wasserstein-2 distance between $p_0$ and $p_1$. 
Below, we show that our proposed data-dependent coupling leads to a provably tighter transport cost bound when using a distributional tree (DT) as the encoder.

Our high-resolution model defines $\Psi_t(\rvx_0) = \vgamma_t \rvx_1 + (1 -\vgamma_t) \rvx_0$ such that $\dot{\Psi}_t = \dot{\vgamma}_t (\rvx_1 - \rvx_0)$.

We need to show that 
\begin{equation*}
    \int_{\sR^{2d}} \lvert\lvert \dot{\Psi}_t \rvert\rvert^2 p^\ast(\rvx_0, \rvx_1) \ud \rvx_0 \ud \rvx_1 \leq \int_{\sR^{2d}} \lvert\lvert \dot{\Psi}_t \rvert\rvert^2 p(\rvx_0) p( \rvx_1) \ud \rvx_0 \ud \rvx_1,
\end{equation*}
where $p^\ast(\rvx_0, \rvx_1)$ is our data-dependent coupling from \cref{eq:data_dependent_coupling} and $\rvz$ is derived by the DT encoder. We assume that $\dot{\vgamma}_t$ is the same, regardless of the used coupling.

First, for the independent coupling the expectation is taken over \mbox{$\rvx_0 \sim p(\rvx_0) = \mathcal{N}(\vzero, \rmI)$} and $\rvx_1 \sim p_1$ such that
\begin{align*}
    \E[ \lvert\lvert \dot{\Psi}_t \rvert\rvert^2] &= \E[ \lvert\lvert \dot{\vgamma}_t (\rvx_1 - \rvx_0) \rvert\rvert^2] \\
    &= \dot{\vgamma}_t^2 \bigl[ \E[ \lvert\lvert \rvx_1 \rvert\rvert^2 + \lvert\lvert \rvx_0 \rvert\rvert^2 - 2 \rvx_1^\tp \rvx_0 \bigr] \\
    &= \dot{\vgamma}_t^2 \bigl[ \E[ \lvert\lvert \rvx_1 \rvert\rvert^2 + K_{\text{num}}  \bigr],
\end{align*}
where we used that $\Var[X] = \E[X^2] - \E[X]^2$ and $\Cov[X,Y] = \E[XY] - \E[X] \E[Y]$. We can deconstruct the expression into a sum over the $K_{\text{num}}$ features $x_1^{(i)}$:
\begin{equation}
    \label{eq:indep_coupling_proof}
    \E[ \lvert\lvert \dot{\Psi}_t \rvert\rvert^2] = \dot{\vgamma}_t^2 \sum_i^{K_\text{num}} \Bigl[ \E[ (x_1^{(i)})^2 ] \Bigr]  + \dot{\vgamma}_t^2  \sum_i^{K_\text{num}}  \Bigl[\E[1] \Bigr].
\end{equation}

For our data-dependent coupling, we have $p(\rvx_0, \rvx_1) = \sum_{\rvz \in \mathcal{Z}} p(\rvx_0 | \rvz) p(\rvz | \rvx_1) p(\rvx_1)$ from \cref{eq:data_dependent_coupling} such that (from \cref{eq:x_0_def}):
\begin{equation*}
    \rvx_0 = \bs{\mu}(\rvz) + \bm{\sigma}(\rvz) \rvepsilon \text{ with } \rvepsilon \sim \mathcal{N}(\vzero, \rmI).
\end{equation*}
Since $\rvz = f(\rvx_1)$ is a deterministic function of $\rvx_1$, we only take the expectation over $\rvx_1$ and $\rvepsilon$ to derive
\begin{align*}
    \E[ \lvert\lvert \dot{\Psi}_t \rvert\rvert^2] &= \E[ \lvert\lvert \dot{\vgamma}_t (\rvx_1 - \rvx_0) \rvert\rvert^2] \\
    &= \dot{\vgamma}_t^2 \E[\lvert\lvert (\rvx_1 - \bs{\mu}(f(\rvx_1)) - \bm{\sigma}(f(\rvx_1)) \rvepsilon) \rvert\rvert^2]
\end{align*}

We let $z^{(i)} = f(x_1)^{(i)}$ and deconstruct the above expression as a sum over $K_{\text{num}}$ features $x_1^{(i)}$ as
\begin{align*}
    \E[ \lvert\lvert \dot{\Psi}_t \rvert\rvert^2] &= \dot{\vgamma}_t^2 \E \biggl[ \sum_i^{K_\text{num}} \Bigl(x_1^{(i)} - \mu_{z^{(i)}} - \sigma_{z^{(i)}} \varepsilon^{(i)} \Bigr)^2 \biggr] \\
    &= \dot{\vgamma}_t^2 \E \sum_i^{K_\text{num}} \biggl[  \Bigl(  x_1^{(i)} - \mu_{z^{(i)}}\Bigr)^2 + \Bigl( \sigma_{z^{(i)}} \varepsilon^{(i)} \Bigr)^2  - 2\Bigl(  x_1^{(i)} - \mu_{z^{(i)}}\Bigr)  \sigma_{z^{(i)}} \varepsilon^{(i)} \biggr] \\
    &= \dot{\vgamma}_t^2 \sum_i^{K_\text{num}}  \biggl[ \E \Bigl(  x_1^{(i)} - \mu_{z^{(i)}}\Bigr)^2 + \E \Bigl( \sigma_{z^{(i)}}^2 (\varepsilon^{(i)})^2 \Bigr) \biggr],
\end{align*}
since $x_1^{(i)} \perp \varepsilon^{(i)}$ which implies
\begin{align*}
    \E \Bigl(  x_1^{(i)} - \mu_{z^{(i)}} \Bigr)  \sigma_{z^{(i)}} \varepsilon^{(i)} &= \E \bigl[ x_1^{(i)} \sigma_{z^{(i)}} \varepsilon^{(i)} \bigr] - \E \bigl[ \mu_{z^{(i)}} \sigma_{z^{(i)}} \varepsilon^{(i)} \bigr] \\
    &= \E \bigl[ x_1^{(i)} \sigma_{z^{(i)}}\bigr] \E \bigl[\varepsilon^{(i)} \bigr] - \E \bigl[ \mu_{z^{(i)}} \sigma_{z^{(i)}}  \bigr] \E \bigl[  \varepsilon^{(i)} \bigr] \\
    &= 0,
\end{align*}
as $\E[\varepsilon^{(i)}] = 0$. Using $\Var[\varepsilon^{(i)}] = \E[(\varepsilon^{(i)})^2] - \E[\varepsilon^{(i)}]^2 = 1$, we can further derive
\begin{equation}
    \label{eq:data_dependent_coupling_proof}
    \E[ \lvert\lvert \dot{\Psi}_t \rvert\rvert^2] = \dot{\vgamma}_t^2 \sum_i^{K_\text{num}}  \biggl[ \E \Bigl[ \bigl(  x_1^{(i)} - \mu_{z^{(i)}}\bigr)^2 \Bigr] \biggr] + \dot{\vgamma}_t^2 \sum_i^{K_\text{num}} \biggl[\E \Bigl[ \sigma_{z^{(i)}}^2 \Bigr] \biggr].
\end{equation}

If we now compare \cref{eq:indep_coupling_proof} and \cref{eq:data_dependent_coupling_proof}, we recognize that to show that $\E[ \lvert \lvert \dot{\Psi}_t \rvert\rvert^2]$ is smaller under our data-dependent coupling, it suffices to show feature-wise that
\begin{equation}
    \label{eq:proof_sufficient_1}
    \E \Bigl[ \bigl(  x_1^{(i)} - \mu_{z^{(i)}}\bigr)^2 \Bigr] \leq \E[ (x_1^{(i)})^2 ] = 1,
\end{equation}
since we standardize $x_1^{(i)}$ to zero mean, unit variance, and that
\begin{equation}
    \label{eq:proof_sufficient_2}
    \E \bigl[ \sigma_{z^{(i)}}^2 \bigr] \leq \E[1] = 1.
\end{equation}
Note that if we are using the DT encoder, $z^{(i)} = f(x_1)^{(i)}$ simply indicates in which of the $K_i$ terminal leafs the observation falls. The $k$th terminal leaf reflects an interval $[\tau^{(i)}_{k-1}, \tau^{(i)}_k)$ on the real line. Based on all observations falling into the $k$th interval, DT learns a Gaussian distribution with parameters $\mu_k$ and $\sigma_k$. This allows us to rewrite \cref{eq:proof_sufficient_1} as
\begin{equation*}
    \E \Bigl[ \bigl( x_1^{(i)} - \mu_{z^{(i)}}\bigr)^2 \Bigr] = \sum_{k=1}^{K_i} \Pr(\tau^{(i)}_{k-1} < x_1^{(i)} \leq \tau^{(i)}_k) \underbrace{\E_{x_1^{(i)} | x_1^{(i)} \in [\tau^{(i)}_{k-1}, \tau^{(i)}_k)}\Bigl[ \bigl( x_1^{(i)} - \mu_k \bigr)^2 \Bigr]}_{\text{MSE in $k$th interval}}.
\end{equation*}
For each interval, the DT encoder learns the optimal $\mu_k$ by maximizing the likelihood, i.e., minimizing the mean squared error \textit{within the $k$th interval}, which is equivalent to the expectation on the right-hand side. We assign the \textit{optimal} $\mu_k$, i.e., $\mu_{z^{(i)}} = \mu_{k=z^{(i)}}$ such that the MSE is necessarily lower than choosing $\mu_k = 0$ in the case of an independent coupling. This proves that \cref{eq:proof_sufficient_1} holds.

For proving the second condition, given in \cref{eq:proof_sufficient_2}, we only need to show $\sigma_{z^{(i)}}^2 \leq 1$ for all $x_1^{(i)}$. That is, the variance of the terminal leaf in which $x_1^{(i)}$ falls should be at most one for all possible $x_1^{(i)}$. This directly follows from the fact that we separate observations into \textit{smaller} groups based on the intervals determined by the DT encoder. Note that $[\tau^{(i)}_{k-1}, \tau^{(i)}_k) \subseteq \mathrm{supp}(x_1^{(i)})$ for all $k$, which implies $\sigma_k^2 \leq 1$ for all $k$. 

Since both sufficient conditions in \cref{eq:proof_sufficient_1} and \cref{eq:proof_sufficient_2} are proven to hold, we conclude that 
\begin{equation}
    \dot{\vgamma}_t^2 \E[\lvert\lvert (\rvx_1 - \bs{\mu}(f(\rvx_1)) - \bm{\sigma}(f(\rvx_1)) \rvepsilon) \rvert\rvert^2] \leq \dot{\vgamma}_t^2 \bigl[ \E[ \lvert\lvert \rvx_1 \rvert\rvert^2 + K_{\text{num}}  \bigr],
\end{equation}
i.e., our data-dependent coupling based on the DT encoder is able to achieve a lower transport cost bound than the independent coupling.

\subsection{Benchmark Datasets}
\label{sec:details_datasets}

\looseness=-1 Our 12 selected benchmark datasets are highly diverse, this particularly includes the number of rows and columns and the cardinality of categorical features (see Table~\ref{tab:exp_datasets}). We selected the seven datasets \texttt{adult}, \texttt{beijing}, \texttt{default}, \texttt{diabetes}, \texttt{news} and \texttt{nmes} based on their popularity and usage in previous work \citep{kotelnikov2023,mueller2025,shi2025,tiwald2025, zhang2024b}. The other datasets, i.e., \texttt{airlines}, \texttt{credit\_g}, \texttt{electricity}, \texttt{kc1} and \texttt{phoneme}, were selected from the TabZilla benchmark suite for tabular data \citep{mcelfresh2023}. These datasets have been shown to be associated with particularly difficult classification or regression tasks. We assume that part of this difficulty translates into more challenging generation tasks as well. Thus, the added datasets serve as an increased challenge for the generative models compared to the popular benchmark datasets. We selected datasets from TabZilla that a) are truly tabular, e.g., not just tabulated image data, b) include heterogenous features and c) do not include too many missings. All datasets are publicly accessible and licensed under creative commons. We randomly split each dataset into 70/10/20 training, validation and test sets. Numerical features in $\rvx_{\text{num}}$ are quantile transformed and standardized, following the usual practice for tabular data generation.  

All synthetic data is post-processed. This includes a rounding operation to feature-specific precision. Since for training we use the float32 data format, we ensure that at the time of evaluation the maximum precision of both the synthetic and training data is six decimal digits. This is crucial for computing the detection score, as the model may otherwise pick up on differences in precision to distinguish between fake and real samples.

\begin{table}[!h]
    \caption{Overview of the selected experimental datasets. We count the target towards the respective features. The minimum and maximum number of categories are taken over all categorical features.}
    \label{tab:exp_datasets}
    \centering
\resizebox{\textwidth}{!}{
    \begin{tabular}{lccccccc}
        \toprule
        \multirow{2}{*}{\textbf{Dataset}} & \multirow{2}{*}{\textbf{License}} & \multirow{2}{*}{\textbf{Prediction task}} &  \textbf{Total no.} & \multicolumn{2}{c}{\textbf{No.\ of features}} & \multicolumn{2}{c}{\textbf{No.\ of categories}} \\
        & & & \textbf{observations} & \textbf{categorical} & \textbf{continuous} & \,\,\,\textbf{min} & \textbf{max} \\
        \midrule
        \href{https://www.kaggle.com/datasets/wenruliu/adult-income-dataset}{\texttt{adult}} \citep{data_adult} & CC BY 4.0  & binary class.\ & 48\,842 & 9 & 6 & 2 & 42 \\
        \href{https://www.openml.org/search?type=data\&status=active\&id=42493}{\texttt{airlines}} & Public  & binary class.\ & 539\,383 & 5 & 3 & 2 & 293 \\
        \href{https://archive.ics.uci.edu/dataset/381/beijing+pm2+5+data}{\texttt{beijing}} \citep{data_beijing} & CC BY 4.0  & regression & 41\,757 & 1 & 10 & 4 & 4 \\
        \href{https://archive.ics.uci.edu/dataset/144/statlog+german+credit+data}{\texttt{credit\_g}} & CC BY 4.0 & binary class. & 1\,000  & 14 & 7 & 2 & 11 \\
        \href{https://archive.ics.uci.edu/dataset/350/default+of+credit+card+clients}{\texttt{default}} \citep{data_default} & CC BY 4.0  & binary class. & 30\,000 & 10 & 14 & 2 & 11 \\
        \href{https://archive.ics.uci.edu/dataset/296/diabetes+130-us+hospitals+for+years+1999-2008}{\texttt{diabetes}} \citep{data_diabetes} & CC BY 4.0  & binary class. & 101\,766 & 29 & 8 & 2 & 523 \\ 
        \href{https://www.kaggle.com/datasets/ulrikthygepedersen/electricity-demands}{\texttt{electricity}} & CC BY 4.0 & binary class. & 45\,312  & 2 & 7 & 2 & 8 \\
        \href{https://www.openml.org/search?type=data\&status=active\&id=1067}{\texttt{kc1}} \citep{data_kc1} & Public  & binary class.\ & 2\,109 & 1 & 12  & 2 & 2 \\
        \href{https://archive.ics.uci.edu/dataset/332/online+news+popularity}{\texttt{news}} \citep{data_news} & CC BY 4.0 & regression & 39\,644  & 14 & 46 & 2 & 2 \\
        \href{https://vincentarelbundock.github.io/Rdatasets/doc/AER/NMES1988.html}{\texttt{nmes}} \citep{data_nmes} & Public  & regression & 4\,406 & 9 & 10 & 2 & 4 \\
        \href{https://www.openml.org/search?type=data\&status=active\&id=1489}{\texttt{phoneme}} & Public  & binary class.\ & 5\,404 & 1 & 5 & 2 & 2 \\
        \href{https://archive.ics.uci.edu/dataset/468/online+shoppers+purchasing+intention+dataset}{\texttt{shoppers}} \citep{sakar2019} & CC BY 4.0 & binary class. & 12\,330  & 8 & 10 & 2 & 20 \\
        \bottomrule
    \end{tabular}
    }
\vskip -0.1in
\end{table}

\paragraph{Missing value simulation.}~First, we remove any rows with missing values in the target, to ensure a valid estimation of the MLE metric, or in any of the numerical features. This gives us full control over the missingness proportion and mechanism. To simulate missingness, we adopt the approach from prior imputation studies \citep[see e.g.,][]{muzellec2020,zhao2023a,zhang2024a}. 
We choose to simulate missing values for numerical features under a missing not at random (MNAR) mechanism, as it combines a missing at random (MAR), i.e., \mbox{$p(\rvm | \rvx^{(\text{observed})}, \rvx^{(\text{missing})}) = p(\rvm | \rvx^{(\text{observed})})$}, with a missing completely at random (MCAR), i.e., \mbox{$p(\rvm | \rvx^{(\text{observed})}, \rvx^{(\text{missing})}) = p(\rvm)$}, mechanism \citep[see][]{little1987}.  Following prior work \citep{muzellec2020,zhao2023a,zhang2024a}, we simulate missing values using a two-step procedure. First, under a MAR mechanism, we randomly select 30\% of the numerical and categorical features as inputs to a randomly initialized logistic model, to determine the missingness probabilities for the remaining numerical features. The model’s coefficients are scaled to preserve variance, and the bias term is adjusted via line search to achieve a 10\% missing rate.
Second, we apply an MCAR mechanism by setting 10\% of the logistic model’s input features (including selected categorical ones) to missing.
Thus, the missingness introduced by the MAR mechanism may be explained by values which now have been masked by the MCAR mechanism, making them latent to the model. Throughout, we ensure that we do not introduce any missings to the target, to ensure that we can determine the MLE metric. Introducing non-trivial missings increases the complexity of the joint distribution, both in terms of dimensions and dependencies, and makes the job for the generative models more difficult. Missing values in categorical features are simply encoded as a separate category.

\subsection{Implementation Details}
\label{sec:implementations}

\subsubsection{Baseline Implementations}
\label{sec:baseline_implementations}

We benchmark TabCascade against recent state-of-the-art generative models, many of which are diffusion-based. To ensure that the benchmarks are fair, we align the models as much as possible. For diffusion-based models, we use the same MLP-based architecture with the same bottleneck dimension. The MLP contains a projection layer onto the bottleneck dimension (256-dimensional), five fully connected layers, and an output layer. The only differences stem from variations in the required inputs or outputs, which make certain minor model-specific changes to the MLP necessary, e.g.,~CDTD requires predicted logits for categorical features. For all models, we use the same time encoder based on positional embeddings with a subsequent two-layer MLP. For non-diffusion-based models, we try to align the layer dimensions. In any case, similar to \citet{mueller2025} we scale each model to a total of approx.\ 3~million parameters on the \texttt{adult} dataset (when simulating missing values according to the MNAR mechanism) and train it for 30\,000~steps with a batch size of 4096. For diffusion-based models, we limit the maximum training time to 30 minutes to increase model comparability. We use the same data pre-processing pipeline for all models and add model-specific pre-processing steps where necessary. For diffusion-based models, we mostly align the sampling steps to 200. One exception is TabDDPM, which builds on DDPM and therefore requires more sampling steps (default = 1000). A second exception is TabDiff, for which we adopt the authors' suggestion of 50 sampling steps. Otherwise, TabDiff sampling will take an order of magnitude more time than other models, in particular for larger datasets. When available, we follow the default hyperparameters provided by the authors or the package / code documentation. We run all experiments using PyTorch~2.7.1 and TensorFloat32 using a MIG instance on an A100 GPU. All code and configuration files are made available to ensure reproducibility.

Below, we briefly elaborate on each baseline model and its implementation:
\begin{description}[itemindent=0.1em, leftmargin=0pt, labelsep=2pt]
    \item[ARF] \citep{watson2023} -- a generative model that is based on a random forest for density estimation. The implementation is available at \url{https://github.com/bips-hb/arfpy} and licensed under the MIT license. We use package version~0.1.1. For training, we utilize 16~CPU cores and 20~trees as suggested in the paper.
    \item[CTGAN] \citep{xu2019a} -- one of the most popular GAN-based models for tabular data. The implementation is available as part of the Synthetic Data Vault \citep{SDV} at \url{https://github.com/sdv-dev/CTGAN} and licensed under the Business Source License 1.1. We use package version~0.11.0. The architecture dimensions are adjusted to be comparable to MLP used for the diffusion-based models. The model requires that the batch size is divisible by 10. Therefore, we adjust the default batch size of 4096 downwards accordingly.
    \item[TVAE] \citep{xu2019a} -- a VAE-based model for tabular data. The implementation is available as part of the Synthetic Data Vault \citep{SDV} at \url{https://github.com/sdv-dev/CTGAN} and licensed under the Business Source License 1.1. We use package version~0.11.0. The architecture dimensions are adjusted to be comparable to MLP used for the diffusion-based models.
    \item[TabDDPM] \citep{kotelnikov2023} -- a diffusion-based generative model for tabular data that combines multinomial diffusion \citep{hoogeboom2021} and DDPM \citep{sohl-dickstein2015, ho2020}. We base our code on the official implementation available at \url{https://github.com/yandex-research/tab-ddpm} under the MIT license. However, we adjust the model to allow for unconditional generation in case of classification tasks.
    \item[TabSyn] \citep{zhang2024b} -- a latent diffusion model that first learns a transformer-based VAE to map mixed-type data to a continuous latent space. The diffusion model is then trained on that latent space. Note that despite TabSyn utilizing a separately trained encoder, this does \textit{not} result in a lower-dimensional latent space and therefore, does not speed up sampling. We use the official code available at \url{https://github.com/amazon-science/tabsyn} under the Apache~2.0 license. We leave the transformer-based VAE unchanged and scale only the MLP.
    \item[TabDiff] \citep{shi2025} -- a continuous time diffusion model that combines score matching \citep{song2021a,karras2022} with masked diffusion \citep{sahoo2024} and learnable, feature-specific noise schedules. Originally, it relies on transformer-based encoder and decoder parts, which we remove from the model to improve comparability. However, we keep the other parts, including the tokenizer. We scale the bottleneck dimension down to 256 and adjust the hidden layers accordingly, to align the architecture more with the other diffusion-based models. Otherwise, we use the official implementation available at \url{https://github.com/MinkaiXu/TabDiff} under the MIT license.
    \item[CDTD] \citep{mueller2025} -- a continuous time diffusion that combines score matching \citep{song2021a,karras2022} with score interpolation \citep{dieleman2022} and leanable noise schedules. Based on the performance results in the original paper, we use the \textit{by type} noise schedule, that is, we learn an adaptive noise schedule per feature type. We use the official implementation available at \url{https://github.com/muellermarkus/cdtd_simple} under the MIT license. To align architectures and improve comparability, we adjust the MLP dimensions.
\end{description}

None of the selected benchmark models accommodate the generation of missing values in numerical features out of the box. Therefore, to achieve a fair comparison, we endow each model with the simple means to generate missing values. To avoid manipulating a model's internals and therewith potentially disrupting the training dynamics, we confine ourselves to changing the data encoding. For each numerical feature that contains missing values, we introduce an additional binary missingness mask. We simply treat this mask as an additional categorical feature to be generated, and we mean-impute the missing values. After sampling, we overwrite the generated numerical values with \texttt{NaN} based on the generated missingness mask.

\subsubsection{TabCascade Implementation}
\label{sec:tabcascade_implementation}

Since we make use of two separate models instead of a single model, we use the same MLP architecture as for the baselines for both, but scale various layers and components down to achieve \mbox{approx.\ 3 million} total parameters on the \texttt{adult} dataset. These are divided into approx.\ 2 million parameters for the low-resolution model $p^\vtheta_{\text{low}}$ and approx.\ 1 million parameters for the high-resolution model $p^\vtheta_{\text{high}}$. We add the conditioning information about $\rvx_{\text{low}}$ as an additive embedding to the bottleneck layer. 
We train $p^\vtheta_{\text{low}}$ and $p^\vtheta_{\text{high}}$ simultaneously using teacher forcing. That is, we train $p^\vtheta_{\text{high}}$ using the real data instances, instead of the ones generated by $p^\vtheta_{\text{low}}$. This enables an end-to-end training of two separate models with a reduced time penalty. The training and generation processes are described in detail in Algorithm~\ref{alg:training} and Algorithm~\ref{alg:generation} below. Compared to the sampling process of CDTD, we cache the normalized embeddings used in $p^\vtheta_{\text{low}}$ at the start of generation to improve sampling efficiency.

The encoders are trained on the pre-processed numerical features. For the DT encoder we set a max depth of 8 which on the \texttt{adult} dataset translates to an average of 65.5 distinct groups for each feature that are captured by $\rvz$. For the GMM encoder, we set the maximum number of components to 30 to keep the training time below 1 minute on the \texttt{adult} dataset. Empirical evidence shows that this does not tend to limit the estimated number of components, which typically lies below 30, e.g., 12.5 on average on the \texttt{adult} dataset.

\begin{algorithm}[H]
\caption{Training}
\label{alg:training}
\begin{algorithmic}
\STATE \textbf{\# Pre-training}
\STATE Learn feature-wise encoder $z^{(i)} = \text{Enc}_i(x_{\text{num}}^{(i)})$
\STATE 
\STATE \textbf{\# Training}
\STATE Sample $\rvx_{\text{num}}, \rvx_{\text{cat}} \sim p_{\text{data}}$
\STATE Retrieve $z^{(i)} = \text{Enc}_i(x_{\text{num}}^{(i)}) \forall i$ and construct $\rvx_{\text{low}} = (\rvx_{\text{cat}}, \rvz) = (x_{\text{low}}^{(j)})_{j=1}^{K_{\text{low}}}$
\STATE Construct mask for inflated and missing values in $\rvx_{\text{num}}$
\STATE 
\STATE \textbf{\# Low-resolution Model}
\STATE Train CDTD model \citep{mueller2025}
\STATE 
\STATE \textbf{\# High-resolution Model}
\STATE Sample $t\sim \mathcal{U}(0,1)$ and $\rvepsilon \sim \mathcal{N}(\vzero, \rmI)$
\STATE Compute $\rvx_0$ using \cref{eq:x_0_def}, set $\rvx_1 = \rvx_\text{num}$
\STATE Compute $\rvx_t = \vgamma_t(\rvx_{\text{low}}) \rvx_1 + (1-\rvx_{\text{low}}) \rvx_0$
\STATE Form predictions $\rvu_t^\vtheta(\rvx_t | \rvx_{\text{low}}) = \dot{\vgamma}_t(\rvx_{\text{low}}) f^\vtheta(\rvx_t, \rvx_{\text{low}}, t)$
\STATE Compute MSE between $\rvu_t^\vtheta(\rvx_t | \rvx_{\text{low}})$ and the target as in \cref{eq:cfm_loss} (mask losses for missing and inflated values)
\STATE Backpropagate.
\end{algorithmic}
\end{algorithm}

\begin{algorithm}[H]
\caption{Generation}
\label{alg:generation}
\begin{algorithmic}
\STATE \textbf{\# Low-resolution Model} (for more details, see \citet{mueller2025})
\STATE Sample $\rvx_0^{(j)} \sim \mathcal{N}(\vzero, \rmI) \, \forall j$ categorical features
\FOR{$t$ in $t_{\text{grid}}$ with step size $h$}
\STATE Predict $\Pr \Bigl(x_{\text{low}}^{(j)} = c \Bigl| (\rvx_t^{(j)})_{j=1}^{K_{\text{cat}}}, t \Bigr) \forall c \in \{0,1,\dots, C_j\} \, \forall j$
\STATE Compute $\bm{\mu}^{(j)}_t = \sum_{c=1}^{C_j} \Pr\Bigl(x_{\text{low}}^{(j)} = c \Bigl| (\rvx_t^{(j)})_{j=1}^{K_{\text{low}}}, t \Bigr) \cdot \rvx_1^{(j)}(c) \, \forall j$, where $\rvx_1^{(j)}(c)$ is the embedding of category $c$
\STATE Compute $\rvu_t^{(j)}(\rvx_t | \rvx_1) = \frac{\bm{\mu}_t^{(j)} - \rvx_t^{(j)}}{\sigma^2(t)}$
\STATE Take update step $\rvx_t^{(j)} = \rvx_t^{(j)} + h \cdot \rvu_t^{(j)}(\rvx_t | \rvx_1) \, \forall j$
\ENDFOR
\STATE Assign classes based on $\argmax_c \Pr \Bigl(x_{\text{low}}^{(j)} = c \Bigl| (\rvx_1^{(j)})_{j=1}^{K_{\text{low}}}, t=1-h \Bigr) \forall c \in \{0,1,\dots, C_j\} \, \forall j$
\STATE
\STATE \textbf{\# High-resolution Model}
\STATE Retrieve $\bs{\mu}(\rvz), \bs{\sigma}(\rvz)$ and sample $\rvx_0$ using \cref{eq:x_0_def}
\STATE Solve ODE $\rvx_{\text{num}} = \rvx_0 + \int_{t=0}^{t=1} \dot{\vgamma}(\rvx_{\text{low}}) f^\vtheta(\rvx_t, \rvx_{\text{low}}, t) \ud t$
\STATE
\STATE \textbf{\# Post-process Samples}
\STATE Overwrite $\rvx_{\text{num}}$ with inflated or missing values using the information in $\rvz$
\STATE Return $\rvx_{\text{cat}}, \rvx_{\text{num}}$
\end{algorithmic}
\end{algorithm}

\subsection{Evaluation Metrics}
\label{sec:eval_metrics}

\paragraph{Univariate densities (Shape, WD, JSD).}
To evaluate the quality of the column-wise, univariate densities, we mainly use the popular Shape metric, which is part of the SDMetrics library (version 0.22.0) of the Synthetic Data Vault \citep{SDV}. This metric is constructed as follows: For numerical features, we use the Kolmogorov-Smirnov statistic $K_{\text{stat}} \in [0,1]$ and compute the score as $1- K_{\text{stat}}$ feature-wise. Note that $K_{\text{stat}}$ cannot be computed from observations with missing values, which are therefore removed beforehand. For categorical features, we compute the Total Variation Distance (TVD) based on the empirical frequencies of each category value, expressed as proportions $R_c$ and $S_c$ in the real and synthetic datasets, respectively. The TVD between real and synthetic datasets is then given as
\begin{equation*}
    \delta(R, S) = \frac{1}{2} \sum_{c \in \mathcal{C}} \lvert R_c - S_c \rvert.
\end{equation*}
Again, we let the score be $1- \delta(R, S)$ to ensure that an increasing score (up to $1$) indicates improved sample quality. The average score over all features gives the Shape score reported in our results. We report similar scores for numerical and categorical features only, and indicate  them by Shape (num) and Shape (cat), respectively.

To get a more nuanced impression of the univariate densities, we additionally report the Wasserstein distance (WD) for numerical features and the Jensen-Shannon divergence (JSD) for categorical features. Qualitatively, we expect them to convey the same information as the Shape metric.

\paragraph{Bivariate densities (Trend).}
To get a better idea of the accuracy of feature interactions in the synthetic data, we evaluate the Trend score, which is another metric provided by the SDMetrics library (version 0.22.0) of the Synthetic Data Vault \citep{SDV}. This metric focuses on the accuracy of pair-wise correlations. Hence, the aim is to compute a score between every pair of features. For two numerical features, we can simply compute the Pearson correlation coefficient. We denote the score as
\begin{equation*}
    d^{\text{num}}_{i, j} = 1 - 0.5 \cdot \lvert S_{i,j} - R_{i,j} \rvert,
\end{equation*}
where $S_{i,j}$ and $R_{i,j}$ represent the Pearson correlation between features $i$ and $j$ computed on the synthetic and real data, respectively.

For two categorical features, we derive the score from the normalized contingency tables, i.e., from the proportion of samples in each possible combination of categories. To determine the difference between real and synthetic data, we can use the Total Variation Distance (TVD) such that
\begin{equation*}
    d^{\text{cat}}_{i, j} = 1 - 0.5 \sum_{c_i \in \mathcal{C}_i} \sum_{c_j \in \mathcal{C}_j} \lvert S_{c_i, c_j} - R_{c_i, c_j} \lvert,
\end{equation*}
where $\mathcal{C}_i$ and $\mathcal{C}_j$ are the set of categories of features $i$ and $j$ and $S_{i,j}, R_{i, j}$ are the cells from the normalized contigency table corresponding to these categories. 

To be able to compute a comparable score when comparing features of different types, i.e., a numerical and a categorical feature, we first discretize the numerical feature into ten bins and then compute the TVD as explained above. For all scores, a higher score indicates better sample quality. The overall Trend score is the average over all pair-wise scores. Since this metric cannot accommodate missing values in numerical features, we remove observations with missings in the relevant features beforehand. Lastly, to provide further insight into correlations across data types, we also report a Trend (mixed) metric.

\paragraph{Joint density (Detection score).}
While the other metrics so far focus on the sample quality in terms of univariate densities or pair-wise distributions, we are particularly interested in the overall quality of the full joint distribution. Following the typical approach in the literature \citep{bischoff2024,mueller2025,shi2025}, we train a detection model to differentiate between fake and real samples, which make up the training data in equal proportions. This approach is also called a classifier two-sample test (C2ST) \citep{bischoff2024}.

To ensure that the detection model is sensitive to small changes in the distribution, we choose LightGBM \citep{ke2017}. Gradient-boosting models have shown remarkable performance on tabular datasets \citep{borisov2022}. LightGBM has been particularly designed for improved efficiency, which is important for the evaluation of the detection score on larger datasets. Another advantage is that it naturally accommodates missings in numerical features. This allows the detection score to indirectly capture how well the generative model learned the missingness mechanism. To train LightGBM, we sample a synthetic dataset of the same size as the training set used for the generative model. The objective is to classify whether a given sample is real or synthetic. We use 5-fold cross-validation to estimate the out-of-sample performance, with a max depth of 5 and 500 boosting iterations. To get the final detection score, we first record the highest average AUC obtained over validation sets across boosting iterations, denoted by $\bar{A}$. The detection score is then computed as
\begin{equation*}
    \text{Detection Score} = 1 - (\max(0.5, \bar{A}) \cdot 2 - 1), 
\end{equation*} 
such that a score of one indicates that the model cannot distinguish between fake and real samples at all. On the other extreme, a score of zero indicates that the model can perfectly classify the samples into fake and real. This procedure mimics the detection metric in the SDMetrics library of the Synthetic Data Vault \citep{SDV} but uses a much more powerful detection model.

\paragraph{Downstream-task performance (Machine learning efficiency).}
Machine learning efficiency (MLE; sometimes also called efficacy or utility) measures the usefulness of the synthetic data for the downstream prediction task, either binary classification or regression, associated with a given dataset. This represents a train-synthetic-test-real strategy: We train a predictor on the synthetic data and test the predictor's out-of-sample performance on the real test data. Similarly, we get the test set performance by training the predictor on the real training data. For regression tasks, we evaluate the RMSE and for classification tasks the AUC. Since our goal is to generate a realistic and faithful copy of the true data, we expect both models to perform similarly on the downstream task, regardless of which data has been used for training. Thus, only the relative comparison of the model performances matters, which we report using their absolute difference
\begin{equation*}
    \text{MLE Score} = \lvert M_S - M_R \rvert, \text{ with } M \in \{\text{AUC}, \text{RMSE} \}.
\end{equation*}

As the predictor, we again pick LightGBM \citep{ke2017} with a max depth of 5 and 500 boosting iterations because of its efficiency and strong predictive performance on tabular data. It also automatically accommodates missings in numerical features. Note that the generative model's ability to generate missing values is evaluated in two different ways: (1) LightGBM may rely directly on missing values to infer the target and (2) the generative model may place missing values incorrectly and thereby eradicates information that would be needed (and is available in the true training data) for the prediction task. Hence, there is a twofold negative impact of a generative model that is not able to accurately learn the missingness mechanism on the downstream task performance.

\paragraph{Diversity (Distance to closest record share).}

Our goal is to approximate the true generative process and provide a fair comparison to existing baselines. As such, we are, similar to previous work, not concerned with any privacy considerations.
To obtain privacy guarantees, context-specific choices, for instance, with regards to the budget for differential privacy, must be made. Such in-processing privacy mechanisms as well as pre-processing and post-processing techniques are typically model agnostic but depend heavily on the dataset as well other considerations, such as legal and ethical questions. Hence, we investigate the distance to closest record (DCR) share only as a metric of diversity rather than privacy. Most importantly, it can inform about models which simply copy training samples, without actually learning the distribution.

To ensure all features are on the same scale, we min-max-scale numerical features and one-hot encode categorical features. We allow for missing values in numerical features by using mean imputation and adding the missingness indicator to the one-hot encoded categorical features. For each synthetic sample we then find the nearest neighbor in the training set in terms of their L$_2$ distance \citep{zhao2021}. Since the DCR is only meaningful when compared to some reference, we report the DCR share \citep{zhang2024b, shi2025}. Let $d_{\text{train}}^{(i)}$ and $d_{\text{test}}^{(i)}$ be the L$_2$ distance of the  $i$-th synthetic sample to the closest training and test sample, respectively. Then we set
\begin{equation*}
    S^{(i)} = \begin{cases}
        1 & \text{if } d_{\text{train}}^{(i)} < d_{\text{test}}^{(i)}, \\
        0 & \text{if } d_{\text{train}}^{(i)} > d_{\text{test}}^{(i)}, \\
        0.5 & \text{if } d_{\text{train}}^{(i)} = d_{\text{test}}^{(i)},
    \end{cases}
\end{equation*}
such that synthetic samples being closer to the training samples than the test samples increase the score. The DCR share is then computed as an average over the scores $S^{(i)}$ obtained for all synthetic samples. In the absence of ties, the optimal DCR share is $N_{\text{train}} / (N_{\text{train}} + N_{\text{test}}) \approx 0.778$, where $N_{\text{train}}$ and $N_{\text{test}}$ are the train and test set sizes, respectively.

\paragraph{Fidelity and coverage ($\alpha$-Precision and $\beta$-Recall).}

Precision and Recall metrics for generative model evaluation have been proposed by \citet{sajjadi2018} and refined for tabular data by \citet{alaa2022}. $\alpha$-Precision measures the probability that synthetic samples resides in the $\alpha$-support of the true distribution and therefore measures sample fidelity. $\beta$-Recall, on the other hand, measures the sample diversity or coverage. That is, what fraction of real samples reside in the $\beta$-support of the generative distribution. For both metrics, higher values indicate better sample quality. For estimation, we rely on the official implementation in the synthcity package \citep{qian2023} available at \url{https://github.com/vanderschaarlab/synthcity}. However, we need to make some minor adjustments, in the same way as for the DCR computation, to accommodate missing values in numerical features.

\paragraph{Privacy (Membership inference attack).}
For completeness, we also the provide scores of a membership inference attack \citep[MIA;][]{shokri2017}. We follow the implementation in the SynthEval package \citep{lautrup2024} available at \url{https://github.com/schneiderkamplab/syntheval/}. 

Let $\mathcal{D}_{\text{train}}, \mathcal{D}_{\text{test}}$ and $\mathcal{D}_{\text{gen}}$ be the training set, test set, and generated data, respectively. First, we split $\mathcal{D}_{\text{test}}$ into $\mathcal{D}^{(\text{train})}_{\text{test}}$~(75\%) and $\mathcal{D}^{(\text{test})}_{\text{test}}$ (25\%). We then train a LightGBM classifier \citep{ke2017} on a training set made up of $\mathcal{D}^{(\text{train})}_{\text{test}}$ and an equally-sized subsample of $\mathcal{D}_{\text{gen}}$. The classifier is trained to predict which samples originated from the generative model. To retrieve score, we combine $\mathcal{D}^{(\text{test})}_{\text{test}}$ with an equally-sized subsample of $\mathcal{D}_{\text{train}}$ and use the predictions to compute the AUC score. We then derive the MIA score as
\begin{equation*}
    \text{MIA Score} = 1 - (\max(0.5, \text{AUC}) \cdot 2 - 1), 
\end{equation*} 
such that a score of one indicates that an attack is not better than random guessing. The final score we report is an average over five repetitions of the above steps, to account for the uncertainty in the subsampling.

\subsection{Encoder Details}
\label{sec:encoder_details}

To encode each $x_{\text{num}}^{(i)}$ into its categorical low-resolution representation $z^{(i)}$, we propose two different encoders: (1) a Dirichlet Process Variational Gaussian Mixture Model and (2) a distributional regression tree. Below, we briefly elaborate on their respective implementations and explain our reasoning behind as well as the differences between these choices.

\subsubsection{Gaussian Mixture Model} An obvious choice for an encoder is a Gaussian Mixture Model (GMM), because it can approximate any density arbitrarily closely. However, its classical variant requires pre-specification of the number of components $K$. This is not desirable, since it would require setting a potentially different $K$ for each feature. Instead, we rely on the Dirichlet Process Variational Gaussian Mixture Model \citep{bishop2006} as provided by the sklearn package. The combination with a Dirichlet Process leads to a mixture of a theoretically infinite number of components. For practical purposes, this allows us to avoid specifying the number of components per feature and instead infer them directly from the data. We specify a weight concentration prior of 0.001, following settings in Synthetic Data Vault \citep{SDV} package RDT (see \url{https://github.com/sdv-dev/RDT}). A low prior encourages the model to put most weight on few components, leading to fewer estimated components after training.

During training, the Variational GMM maximizes a variational lower bound to the maximum likelihood objective and does soft clustering of the data points. To assign an observation $x_{\text{num}}^{(i)}$ to a discrete category $z^{(i)}$ after training and achieve a hard clustering, we let
\begin{equation*}
    z^{(i)} = \argmax_k w_k \log p_k(x_{\text{num}}^{(i)})  = \argmax_k \log w_k \mathcal{N}(x_{\text{num}}^{(i)}; \mu_k, \sigma_k^2),
\end{equation*}
where the $w_k$ are the mixture weights. A drawback of the GMM is that its components may substantially overlap (see Figure~\ref{fig:gmm_groups}). For instance, it is possible that a small variance Gaussian lies in the middle of a high variance Gaussian if this benefits the overall fit. This can make the cluster derived from hard clustering disconnected on the real line. After assigning data points to clusters, this can also cause the mean of the Gaussian component to deviate from the actual mean within the cluster. To address these downsides, we investigate the use of a distributional regression tree instead.

\begin{figure}[H]
    \centering
    \includegraphics[width=\textwidth]{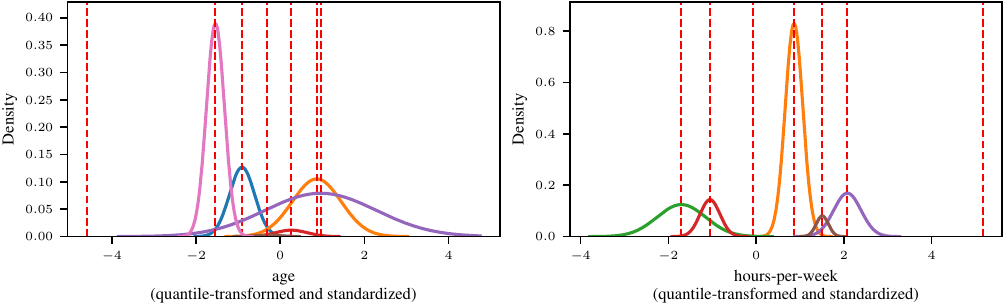}
    \caption{Gaussian components found by the GMM encoder (max components = 7, to align with the number of components found by DT) for two features in the \texttt{adult} dataset. The red vertical lines indicate the means of the Gaussian components.}
    \label{fig:gmm_groups}
\end{figure}

\subsubsection{Distributional Regression Tree}

Trees split the data into more homogeneous subgroups via binary splits. This can capture abrupt shifts and non-linear functions. Distributional regression trees \citep[DT;][]{schlosser2019} utilize the non-parametric nature of trees and combine it with parametric distributions. The goal is to find homogeneous groups with respect to a parametric distribution such that the model captures abrupt changes in any distributional parameters, such as the mean and variance of a Gaussian distribution. 

Training a DT can be interpreted as maximizing a weighted likelihood over $n$ observations:
\begin{equation}
    \hat{\vtheta}(x_{\text{num}}^{(i)}) = \max_{\vtheta \in \Theta} \sum_{k=1}^K w_k(x_{\text{num}}^{(i)}) \cdot \ell(\vtheta_k; x_{\text{num}}^{(i)}),
\end{equation}
where $\vtheta_k = (\mu_k, \sigma_k)$ are the parameters of the $k$th Gaussian component. Note that unlike the GMM, the tree-based approach directly leads to a hard clustering since $w_k(x_{\text{num}}^{(i)}) \in \{0,1\}$ simply indicates the allocated terminal leaf for that data point. For each $x_{\text{num}}^{(i)}$, the fitting algorithm goes through the following steps:
\begin{itemize}[parsep=-2pt,topsep=1pt]
    \item estimate $\hat{\vtheta}$ via maximum likelihood,
    \item test for associations or instabilities of the score $\frac{\partial \ell}{\partial \vtheta}(\hat{\vtheta}; x_{\text{num}}^{(i)})$,
    \item choose split of $\mathrm{supp}(x_{\text{num}}^{(i)})$ that yields the highest improvement in the log likelihood,
    \item repeat until convergence.
\end{itemize}

A DT exhibits various benefits compared to the GMM encoder. It searches for a partitioning of $\mathrm{supp}(x_{\text{num}}^{(i)})$ such that values falling into a given segment are more homogeneous with respect to the moments of the Gaussian distribution. Hence, it directly optimizes a hard clustering of data points and defines a Gaussian component only within the resulting clusters. This substantially reduces the possible overlap of the Gaussian components compared to GMM, a feature which allows us to prove \cref{th:data_dep_coupling}. For empirical evidence, compare Figure~\ref{fig:dt_groups} to Figure~\ref{fig:gmm_groups}. This is also an attractive property when determining a suitable Gaussian-based source distribution for flow matching, as sampling from a Gaussian component guarantees that samples are close in data space.

The level of granularity captured by $z^{(i)}$ is governed by the complexity of the encoder. DT allows us to specify a maximum tree depth but otherwise learns the optimal number of components from the data. Additionally, it is much faster to train than a GMM. We investigate the effect of increasing max depth in additional ablation experiments in Appendix~\ref{sec:ablation_dt_complexity}.

Since no Python implementation of DT is available and the disttree R package is rather outdated, we fork the package and combine it with rpy2 to make it callable in Python. An install script is provided as part of our code repository.

\begin{figure}[H]
    \centering
    \includegraphics[width=\textwidth]{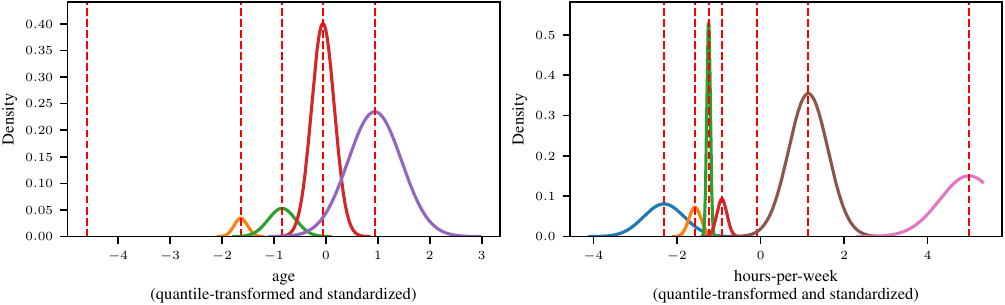}
    \caption{Gaussian components found by the DT encoder (max depth = 3) for two features in the \texttt{adult} dataset. The red vertical lines indicate the means of the Gaussian components.}
    \label{fig:dt_groups}
\end{figure}

\subsubsection{Practical Considerations}

In practice, $\sigma_k^2$ is never exactly zero due to numerical precision. Therefore, if $\sigma_k^2 < \epsilon$, we check empirically whether $\Var[x_{\text{num}}^{(i)} | z^{(i)} = k] = 0$. If this is the case, we treat $\mu_k$ as representing an inflated value. Furthermore, many features may be integers rather than truly continuous. To preserve the ordinal structure, integers are typically modeled as ``continuous''. For integers with few unique values, a complex encoder may produce a $z^{(i)}$ that recovers all unique values. This is \textit{not} a failure case; it simply means that the low-resolution model already has access to \textit{all} information about that feature, such that the high-resolution model does not need to generate it at all. We can interpret this as a data-informed process of deciding whether to treat an integer-valued feature as discrete versus continuous.

\subsection{Time Schedule Details}
\label{sec:schedule_details}

\subsubsection{Polynomial Parameterization of Time Schedule}
\label{sec:poly_schedule}

We parameterize the feature-specific time schedules using the polynomial form proposed by \citet{sahoo2023}. Let $f_\phi : \sR^m\times [0,1] \rightarrow \sR^d$, where $d$ is the number of features and $\rvc \in \sR^m$ be a vector with conditioning information. We define $f_\phi$ as 
\begin{equation}
    f_\phi(\rvc, t) = \frac{\rva_\phi^2(\rvc)}{5} t^5 + \frac{\rva_\phi(\rvc) \rvb_\phi(\rvc)}{2} t^4 + \frac{\rvb_\phi^2(\rvc) + 2\rva_\phi(\rvc)\rvd_\phi(\rvc)}{3} t^3 + \rvb_\phi(\rvc)\rvd_\phi(\rvc)t^2 + \rvd_\phi(\rvc) t,
\end{equation}
where multiplication and division operations are defined element-wise. The parameters $\rva_\psi(\rvc), \rvb_\psi(\rvc)$ and $\rvd_\psi(\rvc)$ are outputs of a neural network with parameters $\psi$ that maps $\sR^m \rightarrow \sR^d \rightarrow \sR^d$ to construct a common embedding which is the input to separate linear layers that map to $\rva_\psi(\rvc), \rvb_\psi(\rvc)$ and $\rvd_\psi(\rvc)$, respectively. We restrict $\rvd_\phi(\rvc) \geq \epsilon$ and use SiLU activation functions. We normalize the function output to get
\begin{equation}
    \vgamma_t(\rvc) = \frac{f_\phi(\rvc, t)}{f_\phi(\rvc, 1)},
\end{equation}
such that $\vgamma_t(\rvc)$ is monotonically increasing for $t \in [0,1]$ and has end points $\vgamma_0(\rvc) = 0$ and $\vgamma_1(\rvc) = 1$. Note that its time-derivative $\dot{\bs{\gamma}}_t(\rvc)$ is available in closed form. 

\subsubsection{Learned Time Schedules}
\label{sec:learned_time_schedules}

Below, we display the feature-specific time schedules $\vgamma_t(\rvx_{\text{low}})$ for each dataset learned by the TabCascade model with DT encoder (one line per feature). Since the time schedule is conditioned on $\rvx_{\text{low}}$ we picture $\E_{\rvx_{\text{low}}}[\vgamma_t(\rvx_{\text{low}})]$ (left) and $\Var_{\rvx_{\text{low}}}[\vgamma_t(\rvx_{\text{low}})]$ (right). While on average a linear time schedule seems beneficial, the model does capture some heterogeneity across features.

\begin{figure}[H]
    \centering
    \includegraphics[width=1\textwidth]{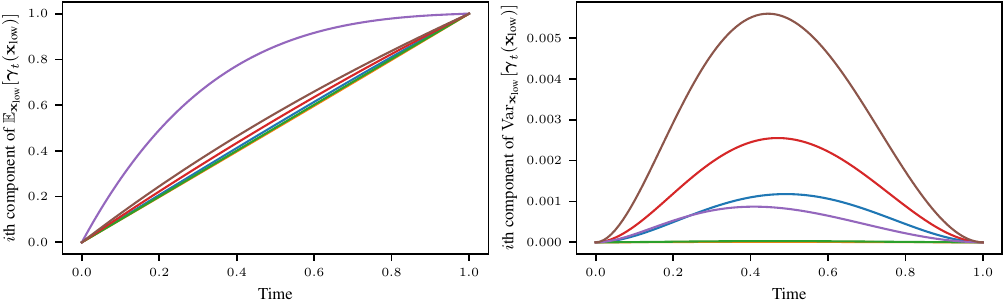}
    \caption{Learned time schedule for the \texttt{adult} dataset.}
\end{figure}

\begin{figure}[H]
    \centering
    \includegraphics[width=1\textwidth]{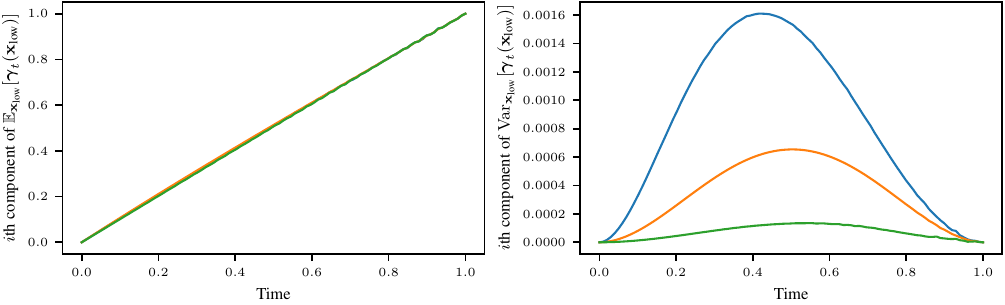}
    \caption{Learned time schedule for the \texttt{airlines} dataset.}
\end{figure}

\begin{figure}[H]
    \centering
    \includegraphics[width=1\textwidth]{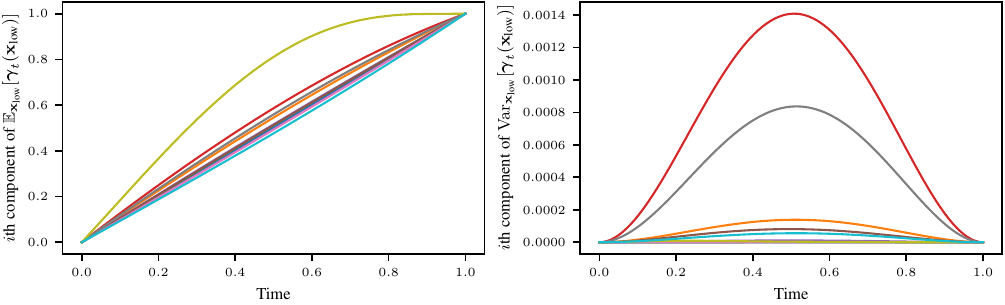}
    \caption{Learned time schedule for the \texttt{beijing} dataset.}
\end{figure}

\begin{figure}[H]
    \centering
    \includegraphics[width=1\textwidth]{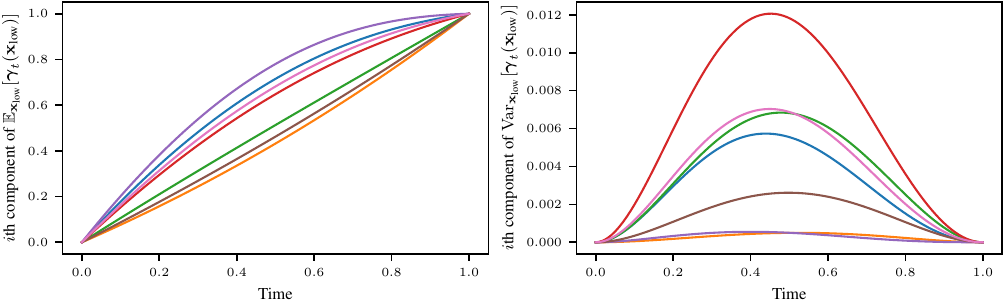}
    \caption{Learned time schedule for the \texttt{credit\_g} dataset.}
\end{figure}

\begin{figure}[H]
    \centering
    \includegraphics[width=1\textwidth]{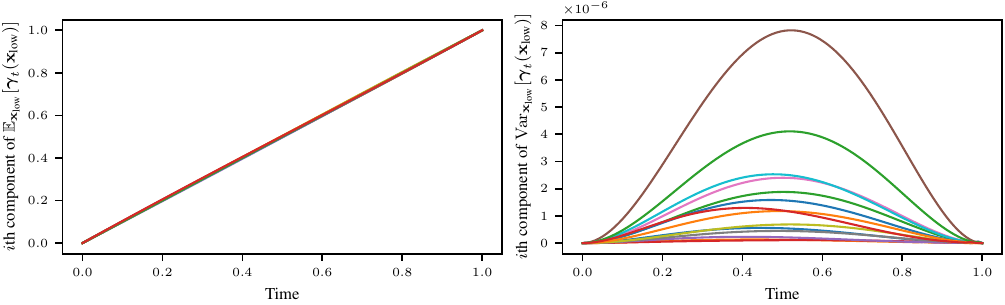}
    \caption{Learned time schedule for the \texttt{default} dataset.}
\end{figure}

\begin{figure}[H]
    \centering
    \includegraphics[width=1\textwidth]{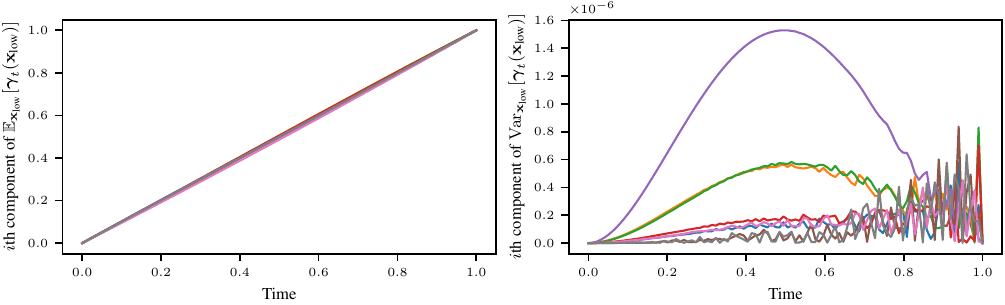}
    \caption{Learned time schedule for the \texttt{diabetes} dataset.}
\end{figure}

\begin{figure}[H]
    \centering
    \includegraphics[width=1\textwidth]{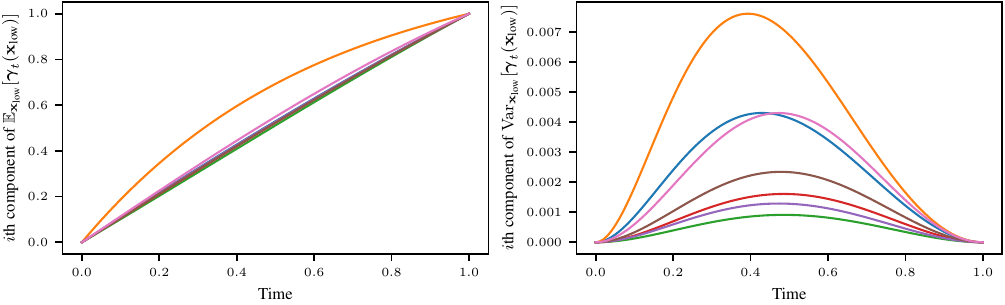}
    \caption{Learned time schedule for the \texttt{electricity} dataset.}
\end{figure}

\begin{figure}[H]
    \centering
    \includegraphics[width=1\textwidth]{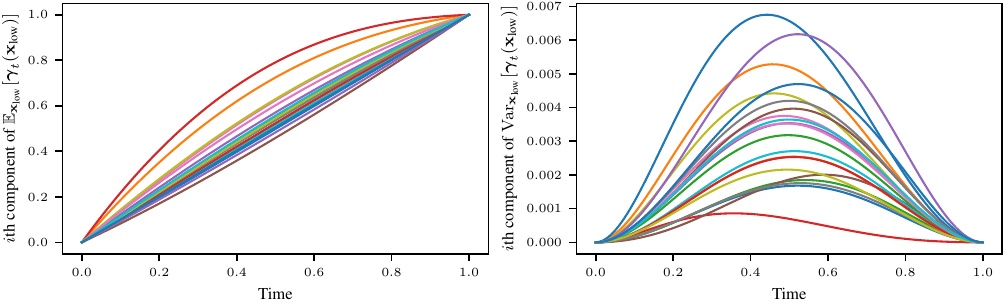}
    \caption{Learned time schedule for the \texttt{kc1} dataset.}
\end{figure}

\begin{figure}[H]
    \centering
    \includegraphics[width=1\textwidth]{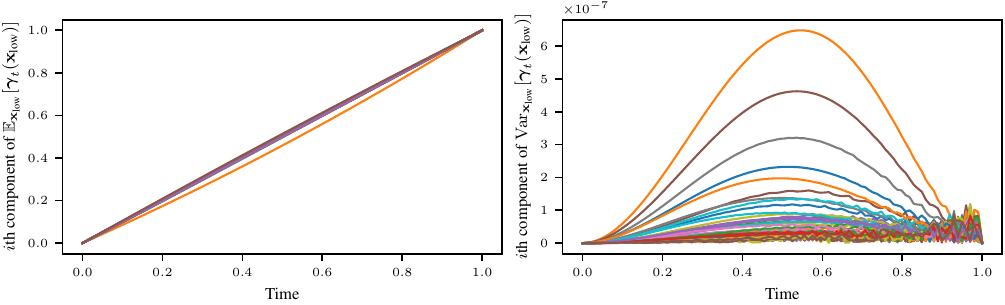}
    \caption{Learned time schedule for the \texttt{news} dataset.}
\end{figure}

\begin{figure}[H]
    \centering
    \includegraphics[width=1\textwidth]{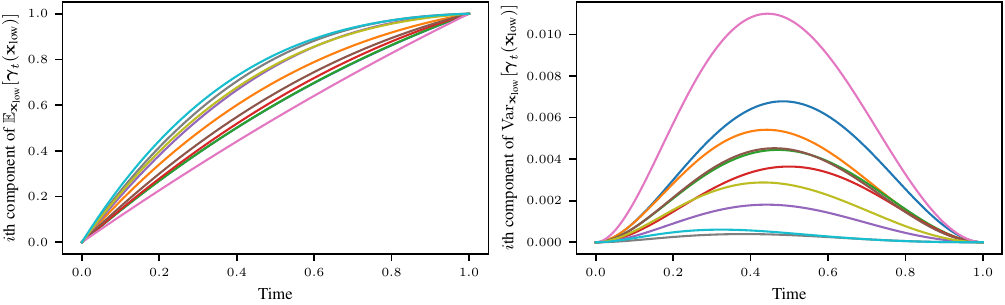}
    \caption{Learned time schedule for the \texttt{nmes} dataset.}
\end{figure}

\begin{figure}[H]
    \centering
    \includegraphics[width=1\textwidth]{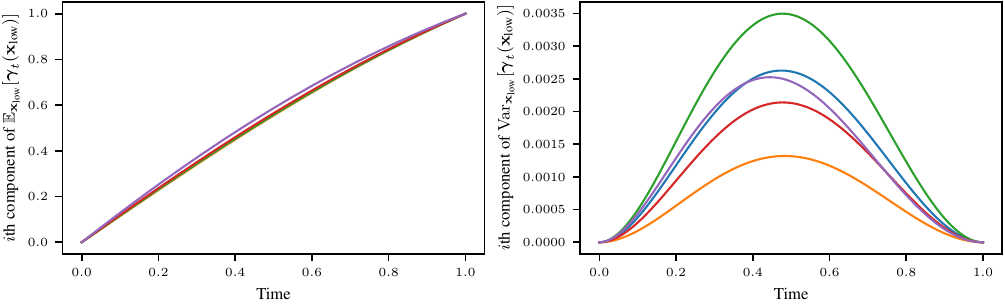}
    \caption{Learned time schedule for the \texttt{phoneme} dataset.}
\end{figure}

\begin{figure}[H]
    \centering
    \includegraphics[width=1\textwidth]{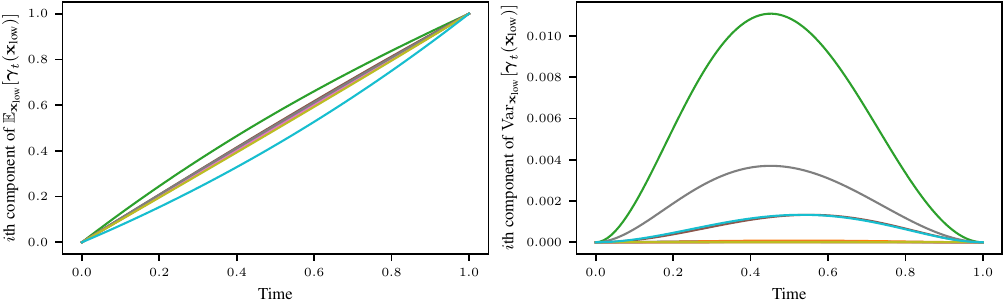}
    \caption{Learned time schedule for the \texttt{shoppers} dataset.}
\end{figure}

\subsection{Qualitative Comparisons}
\label{sec:qual_compare}

\begin{figure}[H]
    \centering
    \includegraphics[width=0.9\textwidth]{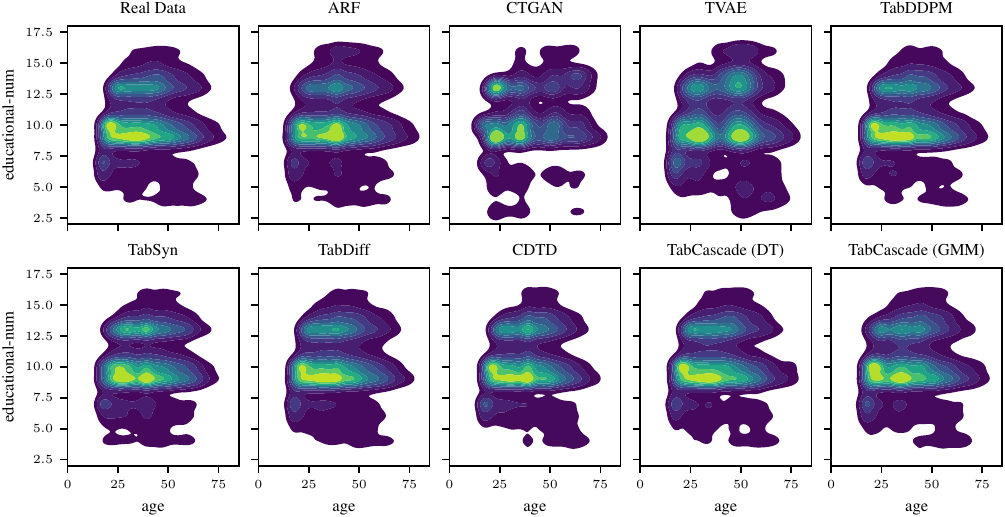}
    \caption{Example of bivariate density from the \texttt{adult} dataset.}
\end{figure}

\begin{figure}[H]
    \centering
    \includegraphics[width=0.9\textwidth]{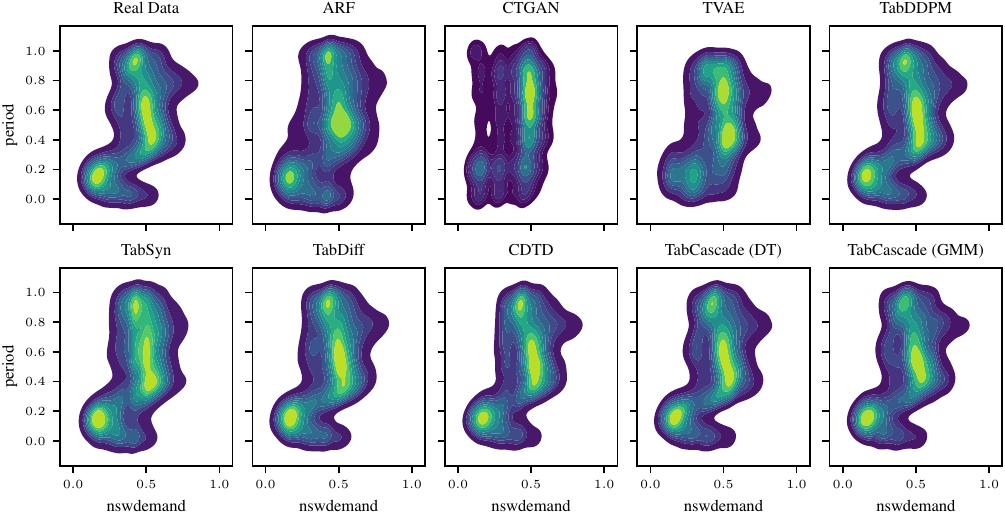}
    \caption{Example of bivariate density from the \texttt{electricity} dataset.}
\end{figure}

\begin{figure}[H]
    \centering
    \includegraphics[width=0.9\textwidth]{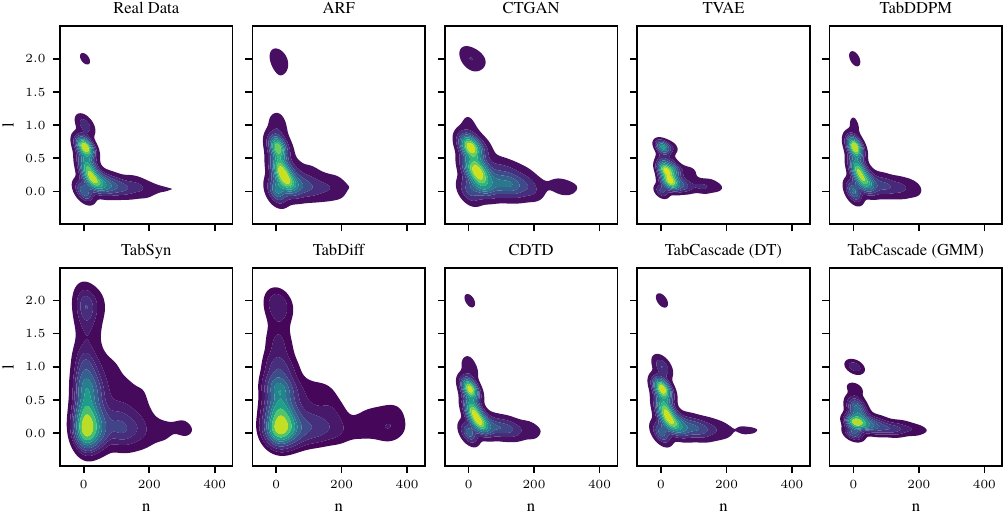}
    \caption{Example of bivariate density from the \texttt{kc1} dataset.}
\end{figure}

\begin{figure}[H]
    \centering
    \includegraphics[width=0.9\textwidth]{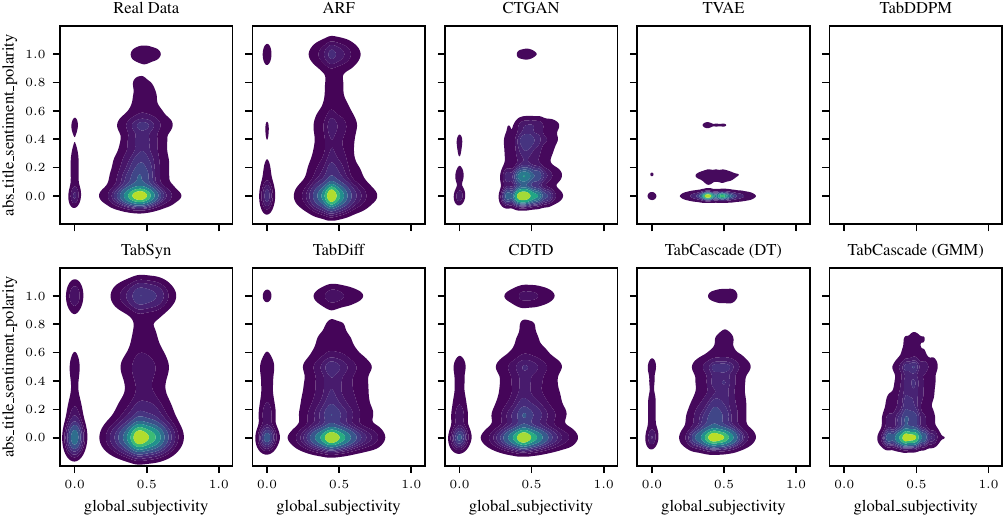}
    \caption{Example of bivariate density from the \texttt{news} dataset. TabDDPM produces NaNs for this dataset.}
\end{figure}

\begin{figure}[H]
    \centering
    \includegraphics[width=0.9\textwidth]{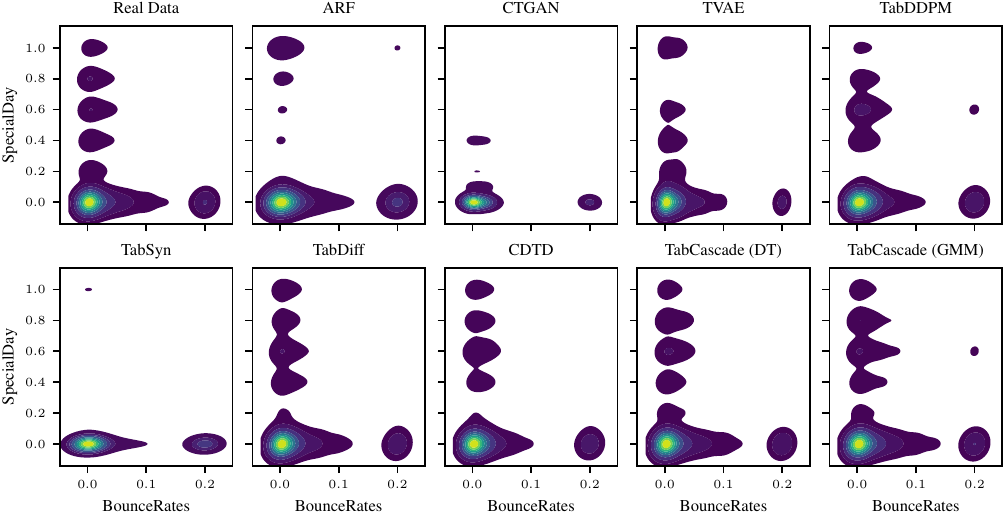}
    \caption{Example of bivariate density from the \texttt{shoppers} dataset.}
\end{figure}

\subsection{Detailed Main Results}
\label{sec:further_results}

\begin{table}[H]
        \caption{Comparison of \textbf{Detection scores}. \textbf{Bold} indicates the best and \underline{underline} the second best result. We report the average and standard deviation across three training runs and 10 different generated samples each.}
        \label{tbl:detect_score_results}
        \centering
        \resizebox{\textwidth}{!}{
  % [inline block 0: 17 envs, 47395 chars in 17 pieces, piece 1 here, a bare % at each other -> data_tex | \begin{tabular}{lcccccccc}         \toprule...]

        }
    \end{table}

\begin{table}[H]
        \caption{Comparison of \textbf{Shape scores}.  \textbf{Bold} indicates the best and \underline{underline} the second best result. We report the average and standard deviation across three training runs and 10 different generated samples each.}
        \label{tbl:shape_results}
        \centering
        \resizebox{\textwidth}{!}{
    %
        }
    \end{table}

\begin{table}[H]
        \caption{Comparison of \textbf{Shape (test) scores} computed based on the test data.  \textbf{Bold} indicates the best and \underline{underline} the second best result. We report the average and standard deviation across three training runs and 10 different generated samples each.}
        \label{placeholder_label1}
        \centering
        \resizebox{\textwidth}{!}{
   %
        }
\end{table}

\begin{table}[H]
        \caption{Comparison of \textbf{Shape (cat) scores}, which evaluate categorical univariate densities only.  \textbf{Bold} indicates the best and \underline{underline} the second best result. We report the average and standard deviation across three training runs and 10 different generated samples each.}
        \label{tbl:shape_cat_results}
        \centering
        \resizebox{\textwidth}{!}{
    %
        }
    \end{table}

\begin{table}[H]
        \caption{Comparison of \textbf{Shape (num) scores}, which evaluates numerical univariate densities only.  \textbf{Bold} indicates the best and \underline{underline} the second best result. We report the average and standard deviation across three training runs and 10 different generated samples each.}
        \label{tbl:shape_num_results}
        \centering
        \resizebox{\textwidth}{!}{
     %
        }
    \end{table}

\begin{table}[H]
        \caption{Comparison of \textbf{Wasserstein (WD) distances}, which we use to evaluate numerical univariate densities only.  \textbf{Bold} indicates the best and \underline{underline} the second best result. We report the average and standard deviation across three training runs and 10 different generated samples each.}
        \label{tbl:wd_results}
        \centering
        \resizebox{\textwidth}{!}{
    %
        }
    \end{table}
\begin{table}[H]
        \caption{Comparison of \textbf{Wasserstein (WD (test)) distances}, computed using the test set.  \textbf{Bold} indicates the best and \underline{underline} the second best result. We report the average and standard deviation across three training runs and 10 different generated samples each.}
        \label{placeholder_label2}
        \centering
        \resizebox{\textwidth}{!}{
    %
        }
    \end{table}

\begin{table}[H]
        \caption{Comparison of \textbf{Jensen-Shannon divergences (JSD)}, which we use to evaluate categorical univariate densities only.  \textbf{Bold} indicates the best and \underline{underline} the second best result. We report the average and standard deviation across three training runs and 10 different generated samples each.}
        \label{tbl:jsd_results}
        \centering
        \resizebox{\textwidth}{!}{
    %
        }
    \end{table}

\begin{table}[H]
        \caption{Comparison of \textbf{Jensen-Shannon divergences (JSD (test))}, computed using the test set.  \textbf{Bold} indicates the best and \underline{underline} the second best result. We report the average and standard deviation across three training runs and 10 different generated samples each.}
        \label{placeholder_label3}
        \centering
        \resizebox{\textwidth}{!}{
    %
        }
    \end{table}

\begin{table}[H]
        \caption{Comparison of \textbf{Trend scores}.  \textbf{Bold} indicates the best and \underline{underline} the second best result. We report the average and standard deviation across three training runs and 10 different generated samples each.}
        \label{tbl:trend_results}
        \centering
        \resizebox{\textwidth}{!}{
    %
        }
    \end{table}

\begin{table}[H]
        \caption{Comparison of \textbf{Trend (test) scores} computed using the test set.  \textbf{Bold} indicates the best and \underline{underline} the second best result. We report the average and standard deviation across three training runs and 10 different generated samples each.}
        \label{placeholder_label4}
        \centering
        \resizebox{\textwidth}{!}{
   %
        }
    \end{table}

\begin{table}[H]
        \caption{Comparison of \textbf{Trend (mixed) scores}, which evaluate only the dependencies across feature types.  \textbf{Bold} indicates the best and \underline{underline} the second best result. We report the average and standard deviation across three training runs and 10 different generated samples each.}
        \label{tbl:trend_mixed_results}
        \centering
        \resizebox{\textwidth}{!}{
    %
        }
    \end{table}

    \begin{table}[H]
        \caption{Comparison of \textbf{MLE}. Per dataset, \textbf{bold} indicates the best and \underline{underline} the second best result. We report the average and standard deviation across three training runs and 10 different generated samples each.}
        \label{tbl:mle_results}
        \centering
        \resizebox{\textwidth}{!}{
    %
        }
    \end{table}

\begin{table}[H]
        \caption{Comparison of \textbf{$\alpha$-Precision scores}. Per dataset, \textbf{bold} indicates the best and \underline{underline} the second best result. We report the average and standard deviation across three training runs and 10 different generated samples each.}
        \label{tbl:alpha_prec_results}
        \centering
        \resizebox{\textwidth}{!}{
    %
        }
    \end{table}

\begin{table}[H]
        \caption{Comparison of \textbf{$\beta$-Recall scores}. Per dataset, \textbf{bold} indicates the best and \underline{underline} the second best result. We report the average and standard deviation across three training runs and 10 different generated samples each.}
        \label{tbl:beta_recall_results}
        \centering
        \resizebox{\textwidth}{!}{
   %
        }
    \end{table}

\begin{table}[H]
        \caption{Comparison of \textbf{DCR share scores}. Per dataset, \textbf{bold} indicates the best and \underline{underline} the second best result. We report the average and standard deviation across three training runs and 10 different generated samples each.}
        \label{tbl:dcr_share_results}
        \centering
        \resizebox{\textwidth}{!}{
    %
        }
    \end{table}
    
\vspace{1.5cm}

\begin{table}[H]
        \caption{Comparison of \textbf{MIA scores}. Per dataset, \textbf{bold} indicates the best and \underline{underline} the second best result. We report the average and standard deviation across three training runs and 10 different generated samples each.}
        \label{tbl:mia_results}
        \centering
        \resizebox{\textwidth}{!}{
    %
        }
    \end{table}
\vfill
\newpage

\begin{figure}[H]
    \centering
    \includegraphics[width=1\textwidth]{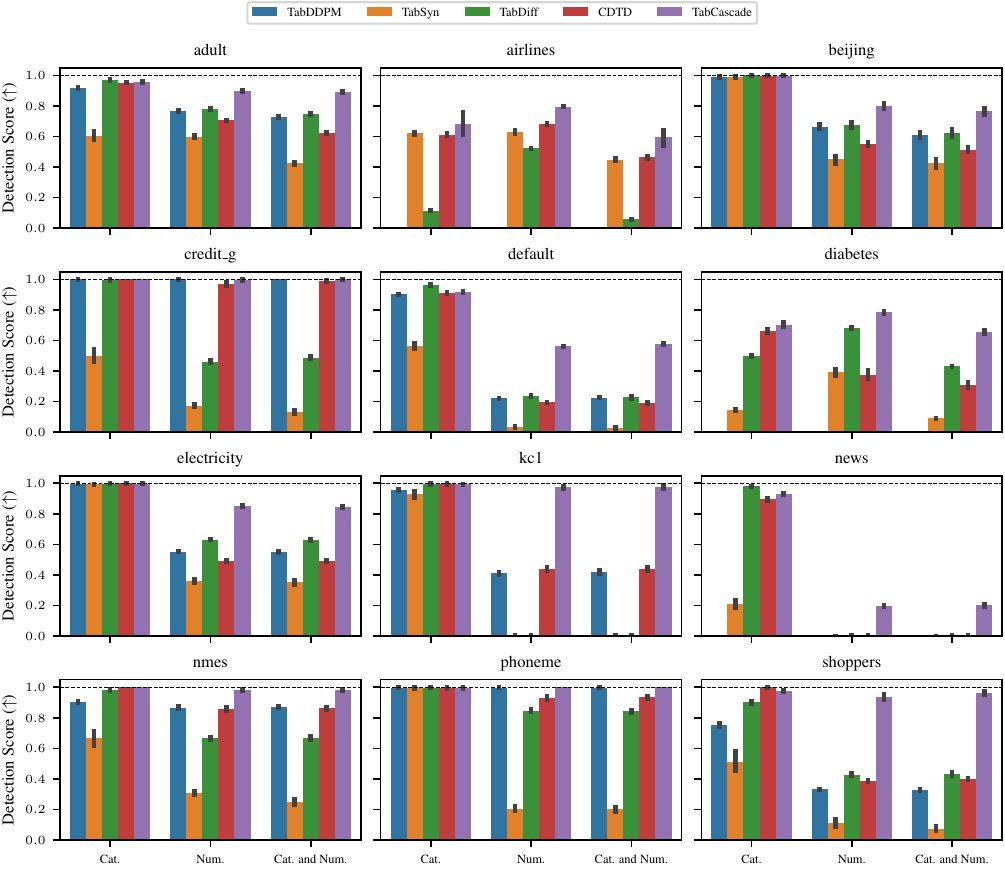}
    \caption{Detection scores for all diffusion-based models and all datasets over three training and ten sampling seeds. The Cat.\ score considers only categorical features, the Num.\ score only numerical features.}
    \label{fig:detection_score_all}
\end{figure}

\subsection{Additional Ablation Experiments}
\label{sec:additional_ablations}

\subsubsection{Ablation experiment training on data without missing values}

\begin{table}[H]
    \vskip -0.05in
    \caption{Average results across datasets (\textbf{without missing values}) and ten sampling seeds for a single training run. We report the standard deviation of the 12 datasets. The best, row-wise result is indicated in \textbf{bold}, the second best is \underline{underlined}. Unlike for the main results, TabDDPM is able to produce samples for the \texttt{news} dataset.}
    \label{tbl:clean_results}
    \centering
    \small
    \resizebox{\textwidth}{!}{
\begin{tabular}{lcccccccc}
\toprule
Metric & ARF & TVAE & CTGAN & TabDDPM & TabSyn & TabDiff & CDTD & Ours (DT) \\
\midrule
Detection Score & $0.373_{\pm 0.200}$ & $0.216_{\pm 0.248}$ & $0.091_{\pm 0.106}$ & $0.550_{\pm 0.359}$ & $0.340_{\pm 0.218}$ & $0.471_{\pm 0.304}$ & $\underline{0.645}_{\pm 0.277}$ & $\mathbf{0.773}_{\pm 0.272}$ \\
Shape & $0.965_{\pm 0.019}$ & $0.893_{\pm 0.062}$ & $0.896_{\pm 0.051}$ & $0.947_{\pm 0.075}$ & $0.955_{\pm 0.031}$ & $0.957_{\pm 0.053}$ & $\underline{0.980}_{\pm 0.011}$ & $\mathbf{0.983}_{\pm 0.013}$ \\
Shape (test) & $0.958_{\pm 0.022}$ & $0.892_{\pm 0.060}$ & $0.894_{\pm 0.051}$ & $0.941_{\pm 0.073}$ & $0.951_{\pm 0.032}$ & $0.951_{\pm 0.052}$ & $\underline{0.972}_{\pm 0.013}$ & $\mathbf{0.976}_{\pm 0.015}$ \\
Shape (cat) & $\mathbf{0.994}_{\pm 0.005}$ & $0.900_{\pm 0.089}$ & $0.870_{\pm 0.084}$ & $0.945_{\pm 0.100}$ & $0.974_{\pm 0.022}$ & $0.972_{\pm 0.066}$ & $\underline{0.986}_{\pm 0.018}$ & $0.985_{\pm 0.022}$ \\
Shape (num) & $0.943_{\pm 0.036}$ & $0.902_{\pm 0.041}$ & $0.904_{\pm 0.033}$ & $0.952_{\pm 0.057}$ & $0.949_{\pm 0.035}$ & $0.959_{\pm 0.043}$ & $\underline{0.973}_{\pm 0.028}$ & $\mathbf{0.986}_{\pm 0.007}$ \\
WD (num) & $0.014_{\pm 0.014}$ & $0.024_{\pm 0.012}$ & $0.023_{\pm 0.011}$ & $0.013_{\pm 0.015}$ & $0.018_{\pm 0.019}$ & $0.017_{\pm 0.025}$ & $\underline{0.008}_{\pm 0.010}$ & $\mathbf{0.003}_{\pm 0.002}$ \\
WD (num, test) & $0.017_{\pm 0.017}$ & $0.026_{\pm 0.014}$ & $0.024_{\pm 0.013}$ & $0.016_{\pm 0.015}$ & $0.020_{\pm 0.021}$ & $0.020_{\pm 0.027}$ & $\underline{0.010}_{\pm 0.011}$ & $\mathbf{0.006}_{\pm 0.007}$ \\
JSD (cat) & $\mathbf{0.008}_{\pm 0.006}$ & $0.119_{\pm 0.097}$ & $0.137_{\pm 0.082}$ & $0.066_{\pm 0.109}$ & $0.035_{\pm 0.028}$ & $0.030_{\pm 0.066}$ & $\underline{0.017}_{\pm 0.019}$ & $0.018_{\pm 0.024}$ \\
JSD (cat, test) & $\mathbf{0.016}_{\pm 0.013}$ & $0.121_{\pm 0.096}$ & $0.138_{\pm 0.081}$ & $0.070_{\pm 0.108}$ & $0.039_{\pm 0.031}$ & $0.037_{\pm 0.064}$ & $\underline{0.024}_{\pm 0.021}$ & $0.025_{\pm 0.026}$ \\
Trend & $0.957_{\pm 0.029}$ & $0.855_{\pm 0.115}$ & $0.809_{\pm 0.118}$ & $0.906_{\pm 0.141}$ & $0.925_{\pm 0.057}$ & $0.927_{\pm 0.105}$ & $\underline{0.961}_{\pm 0.036}$ & $\mathbf{0.964}_{\pm 0.037}$ \\
Trend (test) & $0.927_{\pm 0.041}$ & $0.845_{\pm 0.112}$ & $0.806_{\pm 0.114}$ & $0.881_{\pm 0.133}$ & $0.904_{\pm 0.059}$ & $0.899_{\pm 0.099}$ & $\underline{0.936}_{\pm 0.038}$ & $\mathbf{0.937}_{\pm 0.041}$ \\
Trend (mixed) & $\underline{0.945}_{\pm 0.039}$ & $0.797_{\pm 0.096}$ & $0.685_{\pm 0.097}$ & $0.898_{\pm 0.142}$ & $0.909_{\pm 0.056}$ & $0.923_{\pm 0.085}$ & $0.937_{\pm 0.050}$ & $\mathbf{0.948}_{\pm 0.040}$ \\
MLE & $0.069_{\pm 0.069}$ & $0.082_{\pm 0.081}$ & $0.144_{\pm 0.109}$ & $0.058_{\pm 0.056}$ & $0.057_{\pm 0.037}$ & $0.048_{\pm 0.031}$ & $\underline{0.040}_{\pm 0.031}$ & $\mathbf{0.030}_{\pm 0.040}$ \\
$\alpha$-Precision & $0.934_{\pm 0.071}$ & $0.766_{\pm 0.262}$ & $0.839_{\pm 0.102}$ & $0.859_{\pm 0.203}$ & $0.914_{\pm 0.140}$ & $0.875_{\pm 0.182}$ & $\underline{0.976}_{\pm 0.023}$ & $\mathbf{0.979}_{\pm 0.024}$ \\
$\beta$-Recall & $0.331_{\pm 0.127}$ & $0.229_{\pm 0.201}$ & $0.167_{\pm 0.120}$ & $0.438_{\pm 0.240}$ & $0.267_{\pm 0.151}$ & $0.330_{\pm 0.215}$ & $\underline{0.538}_{\pm 0.154}$ & $\mathbf{0.592}_{\pm 0.126}$ \\
DCR Share & $0.797_{\pm 0.024}$ & $0.811_{\pm 0.051}$ & $\mathbf{0.779}_{\pm 0.021}$ & $0.840_{\pm 0.077}$ & $\underline{0.786}_{\pm 0.018}$ & $0.801_{\pm 0.044}$ & $0.854_{\pm 0.078}$ & $0.889_{\pm 0.077}$ \\
MIA Score & $0.976_{\pm 0.016}$ & $\underline{0.977}_{\pm 0.021}$ & $\mathbf{0.982}_{\pm 0.026}$ & $0.961_{\pm 0.034}$ & $0.977_{\pm 0.019}$ & $0.975_{\pm 0.019}$ & $0.960_{\pm 0.032}$ & $0.944_{\pm 0.031}$ \\
\bottomrule
\end{tabular}
}
\vskip -0.1in
\end{table}

\subsubsection{Ablation experiment on varying the missingness rate}

\begin{table}[H]
    \vskip -0.05in
    \caption{Ablation results on varying the missingness rate averaged over all datasets and sampling seeds for a single training run. We report the standard deviation over the 12 datasetes. A missingness rate of $p=0.10$ was used for the main results.}
    \label{tbl:ablation_missings}
    \centering
    \small
\begin{tabular}{lcccccc}
\toprule
& \multicolumn{3}{c}{TabCascade} & \multicolumn{3}{c}{CDTD} \\
 \cmidrule(lr){2-4} \cmidrule(lr){5-7}
& $p=0.10$ & $p=0.25$ & $p=0.50$ &  $p=0.10$ & $p=0.25$ & $p=0.50$ \\
\midrule
Detection Score & $0.782_{\pm 0.249}$ & $0.765_{\pm 0.252}$ & $0.781_{\pm 0.250}$ & $0.519_{\pm 0.292}$ & $0.503_{\pm 0.312}$ & $0.537_{\pm 0.295}$ \\
Shape & $0.983_{\pm 0.008}$ & $0.979_{\pm 0.010}$ & $0.975_{\pm 0.012}$ & $0.970_{\pm 0.012}$ & $0.962_{\pm 0.018}$ & $0.955_{\pm 0.021}$ \\
Shape (test) & $0.976_{\pm 0.011}$ & $0.971_{\pm 0.012}$ & $0.966_{\pm 0.013}$ & $0.964_{\pm 0.012}$ & $0.956_{\pm 0.016}$ & $0.949_{\pm 0.021}$ \\
Shape (cat) & $0.986_{\pm 0.014}$ & $0.984_{\pm 0.018}$ & $0.984_{\pm 0.015}$ & $0.984_{\pm 0.017}$ & $0.983_{\pm 0.017}$ & $0.983_{\pm 0.017}$ \\
Shape (num) & $0.985_{\pm 0.006}$ & $0.981_{\pm 0.007}$ & $0.973_{\pm 0.014}$ & $0.962_{\pm 0.018}$ & $0.949_{\pm 0.032}$ & $0.940_{\pm 0.028}$ \\
WD (num) & $0.004_{\pm 0.003}$ & $0.005_{\pm 0.003}$ & $0.007_{\pm 0.005}$ & $0.009_{\pm 0.006}$ & $0.011_{\pm 0.007}$ & $0.012_{\pm 0.006}$ \\
WD (num, test) & $0.007_{\pm 0.007}$ & $0.009_{\pm 0.008}$ & $0.011_{\pm 0.008}$ & $0.012_{\pm 0.009}$ & $0.014_{\pm 0.008}$ & $0.016_{\pm 0.010}$ \\
JSD (cat) & $0.018_{\pm 0.014}$ & $0.021_{\pm 0.021}$ & $0.021_{\pm 0.017}$ & $0.020_{\pm 0.018}$ & $0.021_{\pm 0.018}$ & $0.024_{\pm 0.020}$ \\
JSD (cat, test) & $0.024_{\pm 0.017}$ & $0.027_{\pm 0.022}$ & $0.027_{\pm 0.019}$ & $0.026_{\pm 0.020}$ & $0.027_{\pm 0.020}$ & $0.028_{\pm 0.021}$ \\
Trend & $0.965_{\pm 0.027}$ & $0.962_{\pm 0.029}$ & $0.963_{\pm 0.022}$ & $0.957_{\pm 0.032}$ & $0.953_{\pm 0.030}$ & $0.954_{\pm 0.028}$ \\
Trend (test) & $0.941_{\pm 0.032}$ & $0.938_{\pm 0.032}$ & $0.941_{\pm 0.027}$ & $0.935_{\pm 0.034}$ & $0.934_{\pm 0.030}$ & $0.937_{\pm 0.031}$ \\
Trend (mixed) & $0.947_{\pm 0.032}$ & $0.944_{\pm 0.029}$ & $0.952_{\pm 0.023}$ & $0.930_{\pm 0.039}$ & $0.928_{\pm 0.036}$ & $0.938_{\pm 0.026}$ \\
MLE & $0.025_{\pm 0.023}$ & $0.028_{\pm 0.026}$ & $0.027_{\pm 0.021}$ & $0.038_{\pm 0.038}$ & $0.043_{\pm 0.058}$ & $0.039_{\pm 0.037}$ \\
$\alpha$-Precision & $0.976_{\pm 0.025}$ & $0.968_{\pm 0.055}$ & $0.966_{\pm 0.042}$ & $0.971_{\pm 0.044}$ & $0.971_{\pm 0.041}$ & $0.965_{\pm 0.034}$ \\
$\beta$-Recall & $0.586_{\pm 0.110}$ & $0.595_{\pm 0.120}$ & $0.579_{\pm 0.115}$ & $0.577_{\pm 0.127}$ & $0.630_{\pm 0.119}$ & $0.600_{\pm 0.147}$ \\
DCR Share & $0.889_{\pm 0.080}$ & $0.892_{\pm 0.084}$ & $0.905_{\pm 0.082}$ & $0.883_{\pm 0.070}$ & $0.896_{\pm 0.076}$ & $0.899_{\pm 0.074}$ \\
MIA Score & $0.937_{\pm 0.042}$ & $0.938_{\pm 0.044}$ & $0.947_{\pm 0.036}$ & $0.959_{\pm 0.038}$ & $0.960_{\pm 0.038}$ & $0.959_{\pm 0.033}$ \\
\bottomrule
\end{tabular}
\vskip -0.1in
\end{table}

\subsubsection{Ablation experiment using ARF as low-resolution model}

\begin{table}[H]
    \vskip -0.05in
    \caption{Ablation results for changing the low-resolution model $p_{\text{low}}^\vtheta$ in TabCascade from CDTD to ARF, while keeping the high-resolution model $p_{\text{high}}^\vtheta$ fixed. The high-resolution model is not re-trained. Results are obtained from a single training run but averaged over all datasets and ten sampling seeds. We report the standard deviation over the 12 datasets. The best, row-wise result is indicated in \textbf{bold}, the second best is \underline{underlined}. With JSD ($\rvx_{\text{low}}$) we indicate the Jensen-Shannon divergence computed from the true $\rvx_{\text{low}}$ and samples produced by the respective low-resolution models.}
    \label{tbl:ablation_arf}
    \centering
    \small
    \resizebox{\textwidth}{!}{
\begin{tabular}{lccccccccc}
\toprule
Metric & ARF & TVAE & CTGAN & TabDDPM & TabSyn & TabDiff & CDTD & \thead{Ours \\ (DT)} & \thead{Ours\\($p_\text{low}^\vtheta$ = ARF)} \\
\midrule
Detection Score & $0.290_{\pm 0.189}$ & $0.207_{\pm 0.264}$ & $0.070_{\pm 0.073}$ & $0.475_{\pm 0.373}$ & $0.187_{\pm 0.164}$ & $0.421_{\pm 0.286}$ & $\underline{0.519}_{\pm 0.292}$ & $\mathbf{0.782}_{\pm 0.249}$ & $0.494_{\pm 0.279}$ \\
Shape & $0.957_{\pm 0.019}$ & $0.894_{\pm 0.053}$ & $0.893_{\pm 0.034}$ & $0.938_{\pm 0.067}$ & $0.927_{\pm 0.037}$ & $0.955_{\pm 0.047}$ & $0.970_{\pm 0.012}$ & $\underline{0.983}_{\pm 0.008}$ & $\mathbf{0.990}_{\pm 0.007}$ \\
Shape (cat) & $\underline{0.993}_{\pm 0.005}$ & $0.904_{\pm 0.085}$ & $0.892_{\pm 0.047}$ & $0.935_{\pm 0.086}$ & $0.947_{\pm 0.040}$ & $0.973_{\pm 0.060}$ & $0.984_{\pm 0.017}$ & $0.986_{\pm 0.014}$ & $\mathbf{0.993}_{\pm 0.005}$ \\
Shape (num) & $0.932_{\pm 0.028}$ & $0.899_{\pm 0.035}$ & $0.896_{\pm 0.024}$ & $0.942_{\pm 0.055}$ & $0.920_{\pm 0.041}$ & $0.952_{\pm 0.036}$ & $0.962_{\pm 0.018}$ & $\underline{0.985}_{\pm 0.006}$ & $\mathbf{0.989}_{\pm 0.008}$ \\
WD (num) & $0.016_{\pm 0.013}$ & $0.023_{\pm 0.011}$ & $0.025_{\pm 0.012}$ & $0.015_{\pm 0.018}$ & $0.031_{\pm 0.031}$ & $0.015_{\pm 0.021}$ & $0.009_{\pm 0.006}$ & $\underline{0.004}_{\pm 0.003}$ & $\mathbf{0.003}_{\pm 0.004}$ \\
JSD (cat) & $\underline{0.009}_{\pm 0.006}$ & $0.129_{\pm 0.110}$ & $0.115_{\pm 0.049}$ & $0.085_{\pm 0.105}$ & $0.066_{\pm 0.049}$ & $0.030_{\pm 0.061}$ & $0.020_{\pm 0.018}$ & $0.018_{\pm 0.014}$ & $\mathbf{0.008}_{\pm 0.006}$ \\
Trend & $0.944_{\pm 0.040}$ & $0.853_{\pm 0.112}$ & $0.819_{\pm 0.102}$ & $0.901_{\pm 0.126}$ & $0.890_{\pm 0.069}$ & $0.924_{\pm 0.101}$ & $0.957_{\pm 0.032}$ & $\mathbf{0.965}_{\pm 0.027}$ & $\underline{0.957}_{\pm 0.019}$ \\
Trend (mixed) & $0.935_{\pm 0.031}$ & $0.788_{\pm 0.115}$ & $0.727_{\pm 0.087}$ & $0.869_{\pm 0.134}$ & $0.862_{\pm 0.059}$ & $0.920_{\pm 0.087}$ & $0.930_{\pm 0.039}$ & $\mathbf{0.947}_{\pm 0.032}$ & $\underline{0.946}_{\pm 0.030}$ \\
MLE & $0.065_{\pm 0.055}$ & $0.077_{\pm 0.080}$ & $0.130_{\pm 0.078}$ & $0.327_{\pm 0.982}$ & $0.345_{\pm 0.975}$ & $0.045_{\pm 0.027}$ & $\underline{0.038}_{\pm 0.038}$ & $\mathbf{0.025}_{\pm 0.023}$ & $0.052_{\pm 0.039}$ \\
$\alpha$-Precision & $0.960_{\pm 0.029}$ & $0.716_{\pm 0.301}$ & $0.847_{\pm 0.073}$ & $0.757_{\pm 0.294}$ & $0.871_{\pm 0.159}$ & $0.917_{\pm 0.100}$ & $0.971_{\pm 0.044}$ & $\underline{0.976}_{\pm 0.025}$ & $\mathbf{0.982}_{\pm 0.016}$ \\
$\beta$-Recall & $0.348_{\pm 0.095}$ & $0.269_{\pm 0.217}$ & $0.201_{\pm 0.116}$ & $0.460_{\pm 0.273}$ & $0.243_{\pm 0.106}$ & $0.384_{\pm 0.189}$ & $\underline{0.577}_{\pm 0.127}$ & $\mathbf{0.586}_{\pm 0.110}$ & $0.380_{\pm 0.089}$ \\
DCR Share & $0.807_{\pm 0.014}$ & $0.829_{\pm 0.051}$ & $\mathbf{0.788}_{\pm 0.010}$ & $0.864_{\pm 0.073}$ & $\underline{0.788}_{\pm 0.024}$ & $0.800_{\pm 0.046}$ & $0.883_{\pm 0.070}$ & $0.889_{\pm 0.080}$ & $0.810_{\pm 0.015}$ \\
MIA Score & $0.971_{\pm 0.023}$ & $0.970_{\pm 0.030}$ & $\mathbf{0.982}_{\pm 0.018}$ & $0.954_{\pm 0.036}$ & $\underline{0.980}_{\pm 0.016}$ & $0.972_{\pm 0.021}$ & $0.959_{\pm 0.038}$ & $0.937_{\pm 0.042}$ & $0.963_{\pm 0.027}$ \\
JSD ($\rvx_{\text{low}}$) & - & - & - & - & - & - & - & $\underline{0.018}_{\pm 0.013}$ & $\mathbf{0.008}_{\pm 0.006}$ \\
\bottomrule
\end{tabular}
}
\vskip -0.1in
\end{table}

\subsubsection{Ablation experiments varying encoder complexity}
\label{sec:ablation_dt_complexity}

\begin{table}[H]
        \caption{The effect of maximum depth of the DT encoder on various evaluation metrics averaged over datasets and ten sampling seeds for a single training run. We report the standard deviation over the 12 datasets. Grey indicates the maximum depth used for the main results.}
        \label{tbl:encoder_ablation}
        \small
        \centering
        \resizebox{\textwidth}{!}{
        \begin{NiceTabular}{lccccccccc}
         \CodeBefore
        \rowcolor{Gainsboro!60}{7}
        \Body
        \toprule
        \thead{Max.\\Depth} & \thead{Detection\\Score} & \thead{Shape\\(num)} & \thead{WD\\(num)} & Trend & \thead{Trend\\(mixed)}  & MLE & $\alpha$-Precision & $\beta$-Recall & \thead{MIA \\ Score} \\
      \midrule
3 & $0.664_{\pm 0.259}$ & $0.979_{\pm 0.006}$ & $0.005_{\pm 0.002}$ & $0.960_{\pm 0.033}$ & $0.937_{\pm 0.041}$ & $0.026_{\pm 0.020}$ & $0.977_{\pm 0.018}$ & $0.592_{\pm 0.105}$ & $0.945_{\pm 0.043}$ \\
4 & $0.695_{\pm 0.239}$ & $0.981_{\pm 0.006}$ & $0.005_{\pm 0.003}$ & $0.962_{\pm 0.029}$ & $0.941_{\pm 0.038}$ & $0.026_{\pm 0.021}$ & $0.982_{\pm 0.012}$ & $0.600_{\pm 0.107}$ & $0.942_{\pm 0.044}$ \\
5 & $0.701_{\pm 0.237}$ & $0.983_{\pm 0.005}$ & $0.004_{\pm 0.002}$ & $0.960_{\pm 0.036}$ & $0.939_{\pm 0.040}$ & $0.032_{\pm 0.028}$ & $0.982_{\pm 0.012}$ & $0.588_{\pm 0.111}$ & $0.941_{\pm 0.042}$ \\
6 & $0.737_{\pm 0.235}$ & $0.983_{\pm 0.006}$ & $0.004_{\pm 0.003}$ & $0.960_{\pm 0.032}$ & $0.937_{\pm 0.036}$ & $0.031_{\pm 0.026}$ & $0.981_{\pm 0.012}$ & $0.587_{\pm 0.115}$ & $0.944_{\pm 0.037}$ \\
7 & $0.759_{\pm 0.247}$ & $0.984_{\pm 0.006}$ & $0.004_{\pm 0.003}$ & $0.961_{\pm 0.032}$ & $0.939_{\pm 0.035}$ & $0.031_{\pm 0.028}$ & $0.979_{\pm 0.013}$ & $0.590_{\pm 0.111}$ & $0.940_{\pm 0.041}$ \\
8 & $0.788_{\pm 0.251}$ & $0.985_{\pm 0.006}$ & $0.004_{\pm 0.003}$ & $0.965_{\pm 0.027}$ & $0.947_{\pm 0.032}$ & $0.024_{\pm 0.024}$ & $0.976_{\pm 0.025}$ & $0.586_{\pm 0.110}$ & $0.937_{\pm 0.042}$ \\
9 & $0.774_{\pm 0.261}$ & $0.984_{\pm 0.006}$ & $0.004_{\pm 0.002}$ & $0.964_{\pm 0.030}$ & $0.945_{\pm 0.033}$ & $0.026_{\pm 0.030}$ & $0.972_{\pm 0.030}$ & $0.587_{\pm 0.119}$ & $0.936_{\pm 0.040}$ \\
\bottomrule
        \end{NiceTabular}
        }
\end{table}

We systematically investigate the effect of the DT encoder’s complexity by varying its maximum depth (max\_depth) hyperparameter from 3 to 9. Figure~\ref{fig:mask_prop_dt} shows the impact of increasing max\_depth on the proportion of masked inputs to the high-resolution model. For comparison, Figure~\ref{fig:mask_prop_gmm} shows the same for increasing the complexity of the GMM encoder. For features that are integer-valued with few unique values, increasing maximum depth can lead to cases where each unique value is treated as a separate component. In these cases, the feature would be entirely generated by the low-resolution model. 

Further, we investigate the effect of max\_depth on various sample quality metrics. Table~\ref{tbl:encoder_ablation} gives the average results over all datasets with 10 different synthetic samples each. For each setting, we adjusted the model parameters to approx.\ 1 million parameters for the high-resolution model and approx.\ 2 million parameters for the low-resolution model on the \texttt{adult} dataset. We emphasize that the effect of max depth may be different for different architectures but an exhaustive evaluation of all combinations is prohibitively expensive. 

Increasing max\_depth increases the number of Gaussian components. This appears to make samples substantially more realistic in the eyes of the gradient-boosting-based detection model whereas it has a less pronounced effect on the other metrics. Note that with a max depth of only 3, TabCascade still outperforms the baselines in terms of sample realism, illustrating the general benefit of our cascaded pipeline. Increasing max\_depth then enables us to navigate the fidelity-privacy trade-off. Specifically, we can achieve a greater average detection score at the cost of a lower average MIA score. The best choice for max\_depth also depends on which metrics are deemed to be most relevant in a given modeling context. If, for instance, $\alpha$-Precision and $\beta$-Recall are presumed to be more important than the detection score, more favorable results could be achieved by lowering max depth to 4.

\begin{figure}[H]
    \vskip -1pt
    \centering
    \includegraphics[width=1\textwidth]{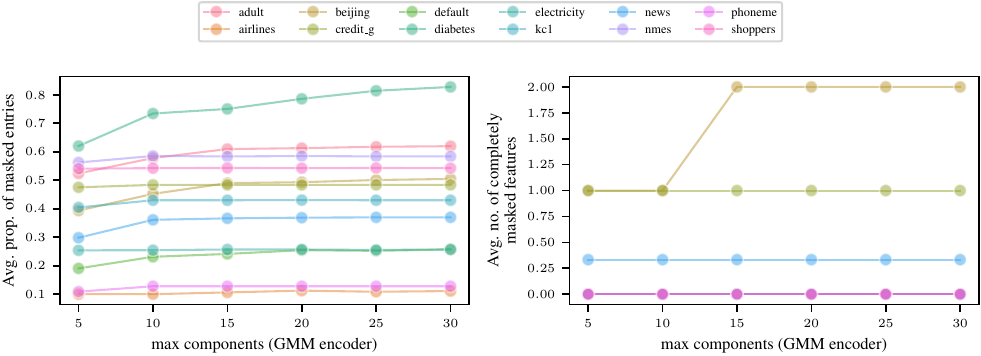}
    \caption{Effect of varying the maximum possible number of Gaussian components in the GMM encoder on the average (over three training seeds) proportion of masked inputs to $p^\vtheta_{\text{high}}$ and the average number of completely masked features.}
    \label{fig:mask_prop_gmm}
\end{figure}

\begin{figure}[H]
    \centering
    \includegraphics[width=1\textwidth]{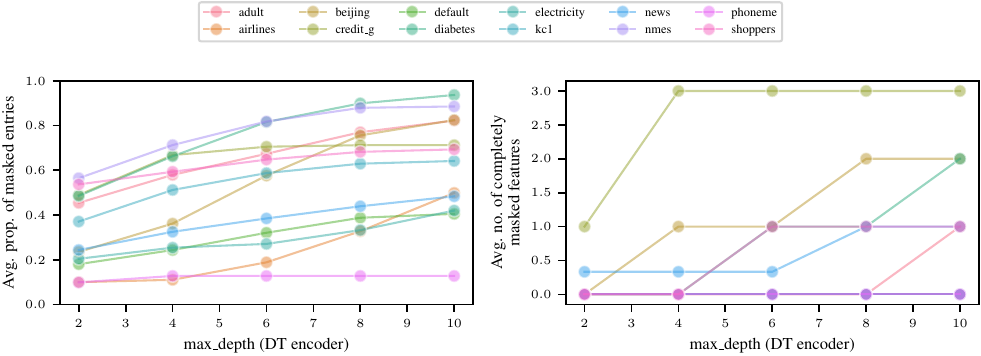}
    \caption{Effect of varying the maximum tree depth in the DT encoder on the average (over three training seeds) proportion of masked inputs to $p^\vtheta_{\text{high}}$ and the average number of completely masked features.}
    \label{fig:mask_prop_dt}
\end{figure}

\subsection{Training and Sampling Times}
\label{sec:train_sample_times}

\begin{table}[H]
        \caption{Training times in minutes. For diffusion-based models, the training time was capped at 30 minutes.}
        \label{tbl_train_times}
        \centering
        \small
        \begin{tabular}{lcccccccc}
        \toprule
        & ARF & TVAE & CTGAN & TabDDPM & TabSyn & TabDiff & CDTD & Ours (DT) \\
      \midrule
     \texttt{adult} & $11.4$ & $20.0$ & $36.2$ & $9.5$ & $14.4$ & $30.0$ & $6.0$ & $11.1$ \\
     \texttt{airlines} & $97.5$ & $20.1$ & $48.0$ & $10.4$ & $6.4$ & $15.9$ & $5.6$ & $10.4$ \\
     \texttt{beijing} & $10.6$ & $21.5$ & $35.3$ & $8.1$ & $13.2$ & $30.0$ & $5.6$ & $11.3$ \\
     \texttt{credit\_g} & $0.1$ & $8.4$ & $22.2$ & $9.6$ & $9.0$ & $30.0$ & $5.4$ & $10.2$ \\
     \texttt{default} & $14.7$ & $25.1$ & $44.1$ & $11.9$ & $19.4$ & $30.0$ & $6.6$ & $11.5$ \\
     \texttt{diabetes} & $56.0$ & $29.5$ & $101.8$ & $30.0$ & $16.2$ & $30.0$ & $8.0$ & $13.2$ \\
     \texttt{electricity} & $10.9$ & $19.7$ & $31.0$ & $8.1$ & $11.6$ & $29.7$ & $5.6$ & $10.4$ \\
     \texttt{kc1} & $1.0$ & $15.6$ & $32.3$ & $9.9$ & $10.9$ & $30.0$ & $5.4$ & $10.9$ \\
     \texttt{news} & $38.7$ & $41.7$ & $68.2$ & $21.1$ & $30.0$ & $30.0$ & $9.2$ & $16.8$ \\
     \texttt{nmes} & $1.1$ & $17.7$ & $33.8$ & $9.7$ & $12.1$ & $30.0$ & $5.7$ & $10.1$ \\
     \texttt{phoneme} & $0.2$ & $17.6$ & $29.8$ & $6.7$ & $9.0$ & $30.0$ & $5.1$ & $9.1$ \\
     \texttt{shoppers} & $3.6$ & $24.1$ & $39.2$ & $10.4$ & $14.3$ & $30.0$ & $6.2$ & $11.1$ \\
        \bottomrule
        \end{tabular}
\end{table}

\begin{table}[H]
        \caption{Sample times in seconds per 1000 samples. TabDDPM produces NaNs for \texttt{airlines}, \texttt{diabetes} and \texttt{news} datasets.}
        \label{tbl:sample_times}
        \centering
        \small
        \begin{tabular}{lcccccccc}
        \toprule
        & ARF & TVAE & CTGAN & TabDDPM & TabSyn & TabDiff & CDTD & Ours (DT) \\
      \midrule
     \texttt{adult} & $1.55$ & $0.14$ & $0.24$ & $7.08$ & $0.53$ & $3.62$ & $2.55$ & $0.67$ \\
     \texttt{airlines} & $2.13$ & $0.05$ & $0.05$ & - & $0.39$ & $3.16$ & $0.67$ & $0.58$ \\
     \texttt{beijing} & $1.09$ & $0.14$ & $0.23$ & $5.32$ & $0.55$ & $2.24$ & $3.76$ & $0.61$ \\
     \texttt{credit\_g} & $1.93$ & $0.08$ & $0.08$ & $8.28$ & $0.41$ & $1.50$ & $1.26$ & $0.73$ \\
     \texttt{default} & $2.43$ & $0.18$ & $0.29$ & $10.19$ & $0.56$ & $3.47$ & $6.38$ & $0.76$ \\
     \texttt{diabetes} & $4.46$ & $0.22$ & $0.32$ & - & $0.53$ & $24.11$ & $3.54$ & $0.88$ \\
     \texttt{electricity} & $0.94$ & $0.06$ & $0.05$ & $4.98$ & $0.38$ & $0.63$ & $0.73$ & $0.61$ \\
     \texttt{kc1} & $2.18$ & $0.11$ & $0.11$ & $8.43$ & $0.43$ & $1.34$ & $1.30$ & $0.73$ \\
     \texttt{news} & $6.76$ & $0.34$ & $0.44$ & - & $0.60$ & $7.69$ & $5.26$ & $1.14$ \\
     \texttt{nmes} & $1.67$ & $0.08$ & $0.08$ & $7.39$ & $0.40$ & $1.13$ & $1.12$ & $0.70$ \\
     \texttt{phoneme} & $0.56$ & $0.04$ & $0.04$ & $4.07$ & $0.38$ & $0.44$ & $0.58$ & $0.56$ \\
     \texttt{shoppers} & $1.71$ & $0.18$ & $0.25$ & $7.45$ & $0.54$ & $3.20$ & $2.90$ & $0.69$ \\
        \bottomrule
        \end{tabular}
    \end{table}

\end{document}